\def\eqref#1{equation~\ref{#1}}
\def\1{\bm{1}}
\DeclareMathAlphabet{\mathsfit}{\encodingdefault}{\sfdefault}{m}{sl}
\SetMathAlphabet{\mathsfit}{bold}{\encodingdefault}{\sfdefault}{bx}{n}
\useunder{\uline}{\ul}{}
\newcommand{\dy}[1]{\textcolor{black}{#1}}
\title{Decodable and Sample Invariant Continuous Object Encoder}
\author{Dehao Yuan, Furong Huang, Cornelia Fermüller \& Yiannis Aloimonos \\
Department of Computer Science\\
University of Maryland\\
College Park, MD 20740, USA \\
\texttt{\{dhyuan, furongh, fermulcm, jyaloimo\}@umd.edu} 
}
\newcommand{\F}{\mathbf{F}}
\newcommand{\x}{\textbf{x}}
\newcommand{\y}{\textbf{y}}
\newcommand{\z}{\textbf{z}}
\newtheorem{theorem}{Theorem}
\newtheorem*{theorem*}{Theorem}
\newtheorem*{definition*}{Definition}
\newtheorem{definition}{Definition}
\newtheorem{lemma}[theorem]{Lemma}
\begin{document}

\maketitle

\vspace{-8pt}
\begin{abstract}
We propose Hyper-Dimensional Function Encoding (HDFE). Given samples of a continuous object (e.g. a function), HDFE produces an explicit vector representation of the given object, invariant to the sample distribution and density. Sample distribution and density invariance enables HDFE to consistently encode continuous objects regardless of their sampling, and therefore allows neural networks to receive continuous objects as inputs for machine learning tasks, such as classification and regression. Besides, HDFE does not require any training and is proved to map the object into an organized embedding space, which facilitates the training of the downstream tasks.  In addition, the encoding is decodable, which enables neural networks to regress continuous objects 
by regressing their encodings.  Therefore, HDFE serves as an interface for processing continuous objects. 

We apply HDFE to function-to-function mapping, where vanilla HDFE achieves competitive performance with the state-of-the-art algorithm. We apply HDFE to point cloud surface normal estimation, where a simple replacement from PointNet to HDFE leads to  12\% and 15\% error reductions in two benchmarks. 
In addition, by integrating HDFE into the PointNet-based SOTA network, we improve the SOTA baseline by 2.5\% and 1.7\% on the same benchmarks. 
% Notably, HDFE can be plugged into any PointNet-based architecture and experiments show the resulting framework gains significantly stronger robustness to noises.
% A vanilla HDFE can even achieve competitive performance as other specialized methods when the point cloud is corrupted with high noises.
\end{abstract}

\vspace{-8pt}
\section{Introduction}
Continuous objects are objects that can be sampled with arbitrary distribution and density. 
Examples include point clouds \citep{guo2020deep}, event-based vision data \citep{gallego2020event}, and sparse meteorological data \citep{lu2021impacts}.
A crucial characteristic of continuous objects, which poses a challenge for learning, is that their sample distribution and size varies between training and test sets.
For example, point cloud data in the testing phase may be sparser or denser than that in the training phase. 
A framework that handles this inconsistency is essential for continuous object learning.

When designing the framework, four properties are desirable: 
(1) \textit{Sample distribution invariance}: the framework is not affected by the distribution from which the samples are collected.
(2) \textit{Sample size invariance}: the framework is not affected by the number of samples.
(3) \textit{Explicit representation}: the framework generates outputs with fixed dimensions, such as fixed-length vectors.
(4) \textit{Decodability}: the continuous object can be reconstructed at arbitrary resolution from the representation.

Sample invariance (properties 1 and 2) ensures that differently sampled instances of the same continuous objects are treated consistently, thereby eliminating the ambiguity caused by variations in sampling. 
An explicit representation (property 3) enables a neural network
to receive continuous objects as inputs, by consuming the encodings of the objects.
% first encoding the continuous object into the explicit representation and then passing the representation to a neural network.
Decodability (property 4) enables a neural network to predict a continuous object, by first predicting the representation and then decoding it back to the continuous object. Fig. \ref{fig:teasor} illustrates the properties and their motivations.

However, existing methodologies, which we divide into three categories, are 
limited when incorporating the four properties.
(1) \textit{Discrete framework}. 
The  methods  discretize continuous objects and process them with neural networks. 
For example, \citet{liu2019point} uses a 3D-CNN to process voxelized point clouds, \citet{kim2017probabilistic} uses an RNN to predict particle trajectories. 
These methods are not sample invariant
--
the spatial and temporal resolution must be consistent across the training and testing phases. 
(2) \textit{Mesh-grid-based framework}. They operate on continuous objects defined on mesh grids and achieve discretization invariance (the framework is not affected by the resolution of the grids). Examples include the
Fourier transform \citep{salih2012fourier}
 and the
neural operator \citep{li2020fourier}. 
But they do not apply to sparse data like point clouds.
(3) \textit{Sparse framework}. They operate on sparse samples drawn from the continuous object. Kernel methods \citep{hofmann2008kernel} work for non-linear regression, classification, etc.
But they do not provide an explicit representation of the function.
PointNet \citep{qi2017pointnet} receives sparse point cloud input and produces an explicit representation, but the representation is not decodable (see Appendix \ref{sec:app_pointnet}). In addition, all the frameworks require extra training of the encoder, which is undesired in some scarce data scenarios.

\dy{Currently, only the vector function architecture (VFA) \citep{frady2021computing} can encode an explicit function into a vector through sparse samples, while preserving all four properties. However, VFA is limited by its strong assumption of the functional form. VFA requires the input function to conform to $f(x)=\sum_k\alpha_k\cdot K(x,x_k)$, where $K:X\times X\rightarrow \mathbb{R}$ is a kernel defined on $X$. If the input function does not conform to the form, VFA cannot apply or induces large errors. In practice, such requirement is rarely satisfied. 
For example, $f(x)$ cannot even approximate a constant function $g(x)=1$: to approximate the constant function, the kernel $K$ must be constant. But with the constant kernel, $f(x)$ cannot approximate other non-constant functions. 
Such limitation greatly hinders the application of VFA. Kindly refer to Appendix \ref{sec:app_VFA} for failure cases and detailed discussions.}

% which makes it unsuitable for many scenarios. We address this fundamental limitation by \emph{relaxing the strict assumption imposed by VFA}. Consequently, we enable the encoding of all Lipschitz functions, and further extend our approach to encoding implicit functions. 

We propose hyper-dimensional function encoding (HDFE), which does not assume any explicit form of input functions but only requires Lipschitz continuity (Appendix \ref{sec:app6_input_types} illustrates some suitable input types). Consequently, HDFE can encode \emph{a much larger class of functions}, while holding all four properties without any training. Thanks to the relaxation, HDFE can be applied to multiple real-world applications that VFA fails, which will be elaborated on in the experiment section. HDFE maps the samples to a high-dimensional space and computes weighted averages of the samples in that space to capture collective information of all the samples.
%, which can capture the collective information of all the samples.
A challenge in HDFE design is maintaining sample invariance, for which we propose a novel iterative refinement process to decide the weight of each sample. The contributions of our paper can be summarized as follows:\\
\begin{itemize}[leftmargin=14pt,topsep=0pt]
\vspace{-11pt}
    \item We present HDFE, an encoder for continuous objects without any training that exhibits sample invariance, decodability, and distance-preservation. \dy{To the best of our knowledge, HDFE is the only algorithm that can encode Lipschitz functions while upholding all the four properties.}
    \item We provide extensive theoretical foundation for HDFE. We prove that HDFE is equipped with all the desirable properties. We also verify them with empirical experiments.
    \item \dy{We evaluate HDFE on mesh-grid data and sparse data. In the mesh-grid data domain, HDFE achieves competitive performance as the specialized state-of-the-art (SOTA) in function-to-function mapping tasks. In the sparse data domain, replacing  PointNet with HDFE leads to average error decreases of 12\% and 15\% in two benchmarks, and incorporating HDFE into the PointNet-based SOTA architecture leads to average error decreases of 2.5\% and 1.7\%.}
\end{itemize}

\begin{figure}[t]
     \centering
     \includegraphics[width=0.95\textwidth]{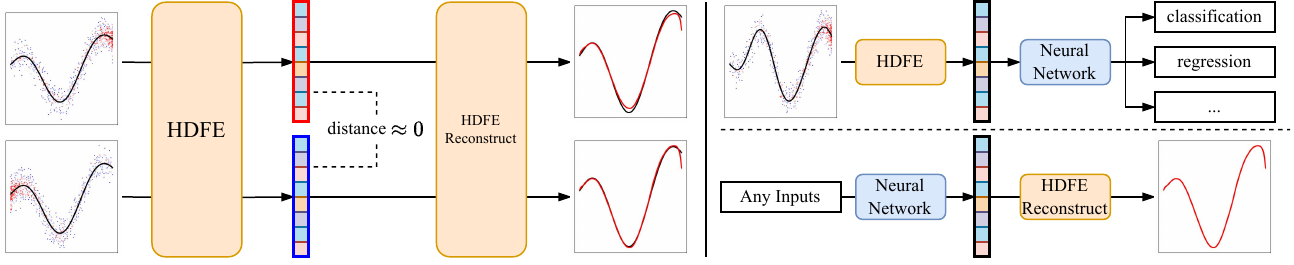}
    \caption{\textbf{Left}: 
    HDFE encodes continuous objects into fixed-length vectors without any training. The encoding is not affected by the distribution and size with which the object is sampled. The encoding can be decoded to reconstruct the continuous object. \textbf{Right}: Applications of HDFE. HDFE can be used to perform machine learning tasks (e.g. classification, regression) on continuous objects. HDFE also enables neural networks to regress continuous objects by predicting their encodings.}
    \label{fig:teasor}
    \vspace{-7pt}
\end{figure}

\section{Problem Definition and Methodology}
Let $\mathit{F}$ be the family of $c$-Lipschitz continuous functions defined on a compact domain $X$ with a compact range $Y$. In other words, $\forall f\in\mathit{F}$, $f:X\rightarrow Y$ and $d_Y\big(f(x_1), f(x_2)\big) \leq c\cdot d_X\big(x_1, x_2\big)$, where $(X, d_X)$ and $(Y, d_Y)$ are metric spaces, and $c$ is the Lipschitz constant. Our goal is to find a representation algorithm that can encode a function $f\in\mathit{F}$ into a vector representation $\F\in\mathbb{C}^N$. To construct it, 
%this vector representation, 
we will feed samples of the function mapping $\big\{\big(x_i,f(x_i)\big)\big\}$ to the representation algorithm, which will generate the vector representation based on these samples.
 
%We formulate a more precise mathematical notion. 
We require the function representation to satisfy the following:
(1) Sample distribution invariance: the function representation is ``not affected" by the distribution from which the samples are collected.
(2) Sample size invariance: the function representation is ``not affected" by the number of  samples.
(3) Fixed-length representation: all functions are represented by fixed-length vectors.
(4) Decodability: as new inputs query the function representation, it can reconstruct the function values.

To better formalize the heuristic expression of ``not affected" in Properties 1 and 2, we introduce the definition of asymptotic sample invariance to formulate an exact mathematical expression:
\begin{definition}[Asymptotic Sample Invariance]
Let $f:X\rightarrow Y$ be the function to be encoded, $p:X\rightarrow (0,1)$ be a probability density function (pdf) on $X$, $\{x_i\}_{i=1}^n \sim p(X)$ be $n$ independent samples of $X$. Let $\F_n$ be the representation computed from the samples $\{x_i, f(x_i)\}^n_{i=1}$, asymptotic sample invariance implies $\F_n$ converges to a limit $\F_\infty$ independent of the pdf $p$.
\label{def:1}
\end{definition}

In this definition, sample size invariance is reflected because the distance between $\F_m$ and $\F_n$ can be arbitrarily small as $m,n$ become large. Sample distribution invariance is reflected because the limit $\F_\infty$ does not depend on the pdf $p$, as long as $p$ is supported on the whole input space $X$.

With the problem definition above, we present our hyper-dimensional function encoding (HDFE) approach. Sec. \ref{sec:explicit} introduces how HDFE encodes explicit functions. Sec. \ref{sec:implicit} generalizes HDFE to implicit function encoding. Sec. \ref{sec:vector_valued} realizes HDFE for vector-valued function encoding. Finally, Sec. \ref{sec:properties} establishes the theorems that HDFE is asymptotic sample invariant and distance-preserving. Throughout the section, we assume the functions are $c$-Lipschitz continuous. The assumption will also be explained in Section \ref{sec:properties}. 
% Sec. \ref{sec:VFA} introduces the previous work VFA and its fundamental limitation. Sec. \ref{sec:explicit} provides a theoretical framework for encoding explicit functions denoted as $f: X\rightarrow Y$. Sec. \ref{sec:implicit} generalizes the HDFE approach to implicit function encoding. In Sec. \ref{sec:specialized}, we present how HDFE can be realized for encoding vector-valued functions denoted as $f:\mathbb{R}^m\rightarrow\mathbb{R}$, making it applicable in a wide range of practical scenarios. Finally, in Section \ref{sec:properties}, we establish the theoretical foundation of HDFE by presenting its key properties. Throughout the section, we assume the functions are $c$-Lipschitz continuous. The assumption will also be explained in Section \ref{sec:properties}. 
Kindly refer to Appendix \ref{sec:app_notation} for the table of notations. 
% \blfootnote{Code is available at: \href{https://github.com/dhyuan99/HDFE}{https://github.com/dhyuan99/HDFE}.}

% The final conclusion we will reach is: {\small\textbf{combining eqn. \ref{eqn:final}, \ref{eqn:FPE} and alg. \ref{alg:1} produces the desired encoding.}}

% For in-depth comparison and relation between VFA and our method, kindly refer to the \textbf{related work} and \textbf{supplementary material}.

% \subsection{Vector Function Architecture (VFA)}
% \input{src/s21_VFA}
% \label{sec:VFA}

\subsection{Explicit Function Encoding}
\textbf{Encoding}~~HDFE is inspired by the methodology of hyper-dimensional computing (HDC) \citep{kleyko2023survey}, where one encodes an indefinite number of data points into a fixed-length vector. 
The common practice is to first map the data points to a high-dimensional space and then average the data point representations 
in that space. 
The resulting superposed vector can represent the distribution of the data. 
Following the idea, we represent an explicit function as the superposition of its samples:
\begin{equation}
    \F = \sum_i w_i\cdot E(x_i, y_i)
    \label{eqn:main}
\end{equation}
where $E$ maps function samples to a high-dimensional space $\mathbb{C}^N$. The question remains (a) how to design the mapping to make the vector decodable; (b) how to determine the weight of each sample $w_i$ so that the representation is sample invariant. We will answer question (a) first and leave question (b) to the iterative refinement section.

Regarding the selection of $E(x,y)$, a counter-example is a linear mapping, where the average of the function samples in the high-dimensional space will degenerate to the average of the function samples, which does not represent the function. To avoid degeneration, the encodings of the samples  should not  interfere with each other if they are far from each other. Specifically, if $d_X(x_1, x_2)$ is larger than a threshold $\epsilon_0$, their function values $f(x_1), f(x_2)$ may be significantly different.  In this case, we want $E(x_1,y_1)$ to be  orthogonal to $E(x_2, y_2)$ to avoid interference. On the other hand, if $d_X(x_1, x_2)$ is smaller than the threshold $\epsilon_0$, by the Lipschitz continuity, the distance between their function values $d_Y(f(x_1), f(x_2))$ is bounded by $c\epsilon_0$. In this case, we want $E(x_1, y_1)$ to be  similar to $E(x_2, y_2)$. We call the tunable threshold $\epsilon_0$  the \textit{receptive field} of HDFE, which will be discussed in Sec. \ref{sec:properties}.
% \fhc{Great discussion!}
Denoting the similarity between vectors as $\langle \cdot, \cdot \rangle$, the requirement can be formulated as:
\begin{equation}
    \langle E(x, y), E(x', y')\rangle
    \begin{cases}
    \approx 1 & d_X(x, x') < \epsilon_0 \\
    \text{decays to 0 quickly} & d_X(x, x') > \epsilon_0
    \end{cases}
    \label{eqn:requirement}
\end{equation}
In addition to avoiding degeneration, we also require the encoding to be decodable. 
This can be achieved by factorizing $E(x,y)$ into two components: 
We first map $x_i$ and $y_i$ to the high-dimensional space $\mathbb{C}^N$ through two different mappings $E_X$ and $E_Y$. 
To ensure \eqref{eqn:requirement} is satisfied, we require $\langle E_X(x), E_X(x')\rangle\approx 1$ when $d_X(x,x')<\epsilon_0$ and that it decays to $0$ otherwise. 
The property of $E_Y$ will be mentioned later in the discussion of decoding. 
Finally, we compute the joint embedding of $x_i$ and $y_i$ through a 
\textit{binding} operation $\otimes$: $E(x_i, y_i)=E_X(x_i)\otimes E_Y(y_i)$. 

We will show that the representation is decodable if the binding operation satisfies these properties:
\vspace{-7pt}
\begin{enumerate}[leftmargin=20pt]
    \itemsep0em 
    \item commutative: $x \otimes y = y \otimes x$
    \item distributive: $x\otimes(y+z)=x\otimes y + x\otimes z$
    \item similarity preserving: $\langle x\otimes y, x\otimes z\rangle = \langle  y, z\rangle$.
    \item invertible: there exists an associative, distributive, similarity preserving operator that undoes the binding, called \textit{unbinding} $\oslash$, satisfying $(x\otimes y)\oslash z=(x\oslash z)\otimes y$ and $(x\otimes y)\oslash x=y$.
\end{enumerate}
\vspace{-5pt}
The binding and unbinding operations can be analogous to multiplication and division, where the difference is that binding and unbinding operate on vectors and are similarity preserving.

\textbf{Decoding} ~~ With the properties of the two operations, the decoding of the function representation can be performed by a similarity search. Given the function representation $\F\in\mathbb{C}^N$, and a query input $x_0\in X$, the estimated function value $\hat{y_0}$ is computed by:
\begin{equation}
 \hat{y_0}=\mathrm{argmax}_{y\in Y} \langle \F\oslash E_X(x_0), E_Y(y)\rangle
 \label{eqn:decoding}
\end{equation}
The distributive property allows the unbinding operation to be performed sample-by-sample. The invertible property allows the unbinding operation to recover the encoding of the function values: $E_X(x_i) \otimes E_Y\big(f(x_i)\big)\oslash E_X(x_0)\approx E_Y\big(f(x_i)\big)\approx E_Y\big(f(x_0)\big)$ when $d_X(x_0, x_i)$ is small. The similarity preserving property ensures that $[E_X(x_i)\oslash E_X(x_0)]\otimes E_Y(f(x_i))$ produces a vector orthogonal to $E_Y(f(x_0))$ when the distance between two samples is large, resulting in a summation of noise. The following formula illustrates the idea and Appendix \ref{sec:app_decoding_proof} details the derivation.
% The decoding can be formulated as below:
\begin{align*}
    \F\oslash E_X(x_0)&=\sum_i w_i\cdot \big[E_X(x_i) \otimes E_Y\big(f(x_i)\big)\oslash E_X(x_0)\big] \\
    &=\underbrace{\sum_{d(x_0, x_i)<\epsilon_0} w_i\cdot E_Y\big(f(x_i)\big)}_{\approx E_Y(f(x_0))} \quad + \underbrace{\sum_{d(x_0, x_i)>\epsilon_0} w_i \cdot [E_X(x_i)\oslash E_X(x_0)]\otimes E_Y(f(x_i))}_{\text{noise, since orthogonal to $E_Y(f(x_0))$}}
\end{align*}
After computing $\F\oslash E_X(x_0)$, we search for $y\in Y$ such that the cosine similarity between $E_Y(y)$ and $\F\oslash E_X(x_0)$ is maximized. We %hope 
%should we say instead
desire  that
$\frac{\partial}{\partial y}\langle E_Y(y), E_Y(y')\rangle > 0$ for all $y$ and $y'$ so that the optimization can be solved by gradient descent. See Appendix \ref{sec:app_decoding} for detailed formulation.

Since the decoding only involves measuring cosine similarity, in the last step, we normalize the function representation to achieve sample size invariance without inducing any loss:
\begin{equation}
    \F = normalize\Big(\sum_i w_i\cdot \big[E_X\big(x_i\big)\otimes E_Y\big(f(x_i)\big)\big]\Big)
    \label{eqn:final}
\end{equation}

\begin{wrapfigure}{R}{0.4\textwidth}
\vspace{-21pt}
\begin{minipage}{0.4\textwidth}
\begin{algorithm}[H]
\caption{Iterative Refinement}\label{alg:1}
\begin{algorithmic}
\State $z_i \gets E_X(x_i)\otimes E_Y(f(x_i))$ for all $i$.
\State $\F=\sum_i z_i$
\While{$\mathrm{min}_i \langle \F, z_i \rangle$ still increases}
\State $j = \mathrm{argmin}_i \langle \F, z_i \rangle$
\State $\F\gets \F + z_j$
\EndWhile
\end{algorithmic}
\end{algorithm}
\end{minipage}
\vspace{-7pt}
\end{wrapfigure}
\textbf{Iterative refinement for sample distribution invariance} ~
In \eqref{eqn:final}, we are left to determine the weight of each sample so that the representation is sample invariant. To address this, we propose an iterative refinement process to make the encoding invariant to the sample distribution. We initialize $w_i=1$ and compute the initial function vector. Then we compute the similarity between the function vector and the encoding of each sample. We then add the sample encoding with the lowest similarity to the function vector and repeat this process until the lowest similarity no longer increases. By doing so, the output will be \textit{the center of the smallest ball containing all the sample encodings}. Such output is asymptotic sample invariant because the ball converges to the smallest ball containing $\cup_{x\in X} [E_X(x)\otimes E_Y(f(x))]$ as the sample size goes large, where the limit ball only depends on the function. We left the formal proof to the Appendix \ref{sec:app_proof_sample_invariance}. In Appendix \ref{sec:app_experiment_cost}, we introduce a practical implementation of the iterative refinement for saving computational cost.

\label{sec:explicit}

\subsection{Implicit Function Encoding}
Generalizing HDFE to implicit functions is fairly straightforward. Without loss of generality, we assume an implicit function is represented as $f(x)=0$. Then it can be encoded using \eqref{eqn:implicit}, where the weights $w_x$ are determined by the iterative refinement:
\begin{equation}
    \F_{f=0} = normalize\Big(\sum_{x:f(x)=0} w_x\cdot E_X(x)\Big)
    \label{eqn:implicit}
\end{equation}
The formula can be understood as encoding an explicit function $g$, where $g(x)=1$ if $f(x)=0$ and $g(x)=0$ if $f(x)\neq 0$. Then by choosing $E_Y(1)=1$ and $E_Y(0)=0$ in \eqref{eqn:final}, we can obtain \eqref{eqn:implicit}. The formula can be interpreted in a simple way: a continuous object can be represented as the summation of its samples in a high-dimensional space.
\label{sec:implicit}

\subsection{Vector-Valued Function Encoding}
In the previous sections, we established a theoretical framework for encoding $c$-Lipschitz continuous functions. In this section, we put this framework into practice by carefully choosing appropriate input and output mappings $E_X$, $E_Y$, the binding operator $\otimes$, and the unbinding operator $\oslash$ in \eqref{eqn:final}. We will first state our choice and then explain the motivation behind it.

\textbf{Formulation}~~ Let $(\x, y)$ be one of the function samples, where $\x\in\mathbb{R}^m$ and $y\in\mathbb{R}$,  the mapping $E_X:X\rightarrow\mathbb{C}^N$, $E_Y:\mathbb{R}\rightarrow \mathbb{C}^N$ and the operations $\otimes$ and $\oslash$ are chosen as:
\begin{equation}
E_X(\x):=\exp\big(i\cdot\alpha\frac{\Phi\x}{m}\big)\qquad  E_Y(y):=\exp\big(i\beta\Psi y\big)
\label{eqn:FPE}
\end{equation}
\vspace{-10pt}
\begin{align*}
        E_X(\x)\otimes E_Y(y) &:=\exp\big(i\cdot\alpha\frac{\Phi\x}{m} + i\beta\Psi y\big)\\
    E_X(\x)\oslash E_Y(y) &:=\exp\big(i\cdot\alpha\frac{\Phi\x}{m} - i\beta\Psi y\big)
\end{align*}

where $i$ is the imaginary unit, $\Phi\in\mathbb{R}^{N\times m}$ and $\Psi\in\mathbb{R}^{N}$ are random fixed matrices where all elements are drawn from the standard normal distribution. $\alpha$ and $\beta$ are hyper-parameters controlling the properties of the mappings.

\textbf{Motivation}~~The above way of mapping real vectors to high-dimensional spaces is modified from \citet{komer2020efficient}, known as fractional power encoding (FPE). We introduce the motivation for adopting this technique heuristically. In Appendix \ref{sec:app_FPE_RBF}, we elaborate on the relation between FPE and radial basis function (RBF) kernels, which gives a rigorous reason for adopting this technique.

First, 
the mappings are continuous, which can avoid losses when mapping samples to the embedding space. Second,
the receptive field of the input mapping $E_X$ (the $\epsilon_0$ in \eqref{eqn:requirement}) can be adjusted easily through manipulating $\alpha$. Fig. \ref{fig:fig2a} demonstrates how manipulating $\alpha$ can alter the behavior of $E_X$. Typically, $\alpha$ has a magnitude of $10$ for capturing the high-frequency component of the function. Thirdly, the decodability of the output mapping $E_Y$ can easily be achieved by selecting appropriate $\beta$ values. We select $\beta$ such that $\langle E_Y(0), E_Y(1)\rangle$ is equal to 0 to utilize the space $\mathbb{C}^N$ maximally while keeping the gradient of $\langle E_Y(y_1), E_Y(y_2)\rangle$ non-zero for all $y_1$ and $y_2$. Per the illustration in Fig. \ref{fig:fig2a}, the optimal choice for $\beta$ is 2.5. Finally, the binding and unbinding operators are defined as the element-wise multiplication and division of complex vectors,
which satisfy the required properties.
\vspace{-7pt}
\label{sec:vector_valued}

\subsection{Properties of HDFE}
HDFE produces an explicit decodable representation of functions. In this section, we state a theorem on the asymptotic sample invariance, completing the claim that HDFE satisfies all four desirable properties. We study the effect of the receptive field on the behavior of HDFE. We also state that HDFE is distance-preserving and discuss the potential of scaling HDFE to high-dimensional data. We leave the proofs to Appendix \ref{sec:app_properties} and verify the claims with empirical experiments in Appendix \ref{sec:app_empirical}. We include several empirical experiments of HDFE in Appendix \ref{sec:app_empirical}, including the cost of the iterative refinement and its practical implementation, the effectiveness of sample invariance in a synthetic regression problem, and the analysis of information loss when encoding continuous objects. 

\begin{theorem}[Sample Invariance]
    HDFE is asymptotic sample invariant (defined at \textbf{Definition} \ref{def:1}).
\end{theorem}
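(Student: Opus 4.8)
The plan is to reduce the claim to a stability property of the minimum enclosing ball (MEB). Recall from Section~\ref{sec:explicit} that the iterative refinement, run on the samples $\{(x_i,f(x_i))\}_{i=1}^n$, outputs the center $c_n$ of the smallest ball containing the finite point set $S_n:=\{z_i\}_{i=1}^n\subset\mathbb{C}^N\cong\R^{2N}$ with $z_i:=E_X(x_i)\otimes E_Y(f(x_i))$, and that $\F_n=normalize(c_n)$. Take as the candidate limit $\F_\infty:=normalize(c_\infty)$, where $c_\infty$ is the center of the smallest ball containing $S:=\{E_X(x)\otimes E_Y(f(x)):x\in X\}$. The key structural point is that $S$ depends only on $f$ and the fixed maps $E_X,E_Y$, never on the pdf $p$; so once we prove $\F_n\to\F_\infty$ almost surely, both sample-size and sample-distribution invariance follow. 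I would establish this in three steps: (i) $S_n\to S$ in Hausdorff distance a.s.; (ii) the MEB center is (locally H\"older) continuous in its argument w.r.t. the Hausdorff metric; (iii) post-compose with the continuous map $normalize$.

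\textbf{Step~(i).} Because $X$ is compact and $p>0$ on all of $X$, a Borel--Cantelli argument (cover $X$ by finitely many balls of radius $\eps$; each has positive $p$-mass, hence a.s. is hit infinitely often) shows the samples $\{x_i\}_{i\ge1}$ are a.s. dense in $X$. The map $\phi(x):=E_X(x)\otimes E_Y(f(x))$ is continuous, since $E_X,E_Y$ and the binding operation are continuous and $f$ is continuous (being $c$-Lipschitz); thus $S=\phi(X)$ is compact, $S_n=\phi(\{x_1,\dots,x_n\})\subseteq S$, and uniform continuity of $\phi$ on $X$ converts $\eps$-density of the $x_i$ in $X$ into $\delta(\eps)$-density of $S_n$ in $S$. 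Hence $d_H(S_n,S)\to0$ a.s.

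\textbf{Step~(ii) --- the crux.} In the Hilbert space $\R^{2N}$, for bounded $A$ put $g_A(u):=\sup_{a\in A}\|u-a\|^2$, so that $c(A)=\argmin_u g_A(u)$ and $r(A)^2=\min_u g_A(u)$. Each $u\mapsto\|u-a\|^2$ is $2$-strongly convex, hence so is the supremum $g_A$; this gives uniqueness of $c(A)$ together with the quadratic growth bound $g_A(u)-g_A(c(A))\ge\|u-c(A)\|^2$. If $d_H(A,B)\le\delta$ with $A,B$ inside a fixed ball, then elementary estimates give $|r(A)-r(B)|\le\delta$ and $|g_A(u)-g_B(u)|\le C\delta$ for $u$ in any bounded region. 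Chaining, $g_A(c(B))\le g_B(c(B))+C\delta=r(B)^2+C\delta\le(r(A)+\delta)^2+C\delta\le g_A(c(A))+C'\delta$, so $\|c(A)-c(B)\|^2\le C'\delta$. Applying this with $A=S$, $B=S_n$ and $\delta=d_H(S_n,S)$ yields $c_n\to c_\infty$ a.s.

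\textbf{Step~(iii) and wrap-up.} Every $z_i$ and every point of $S$ has norm $\sqrt N$ (each complex coordinate has unit modulus), so everything lives in a bounded region and $normalize$ is continuous away from $0$. Under the mild non-degeneracy $c_\infty\ne0$ --- which holds, e.g., when the receptive field $\epsilon_0$ is small compared with $\mathrm{diam}(X)$ so that $S$ does not wrap around the origin; otherwise one simply phrases the invariance for the unnormalized $c_n$, which loses nothing since decoding uses only cosine similarity --- we get $\F_n=normalize(c_n)\to normalize(c_\infty)=\F_\infty$ a.s. As $\F_\infty$ is a function of $S$ only, it does not depend on $p$, which is precisely Definition~\ref{def:1}. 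I expect Step~(ii), the quantitative stability of the MEB center under Hausdorff perturbations, to be the main obstacle; a secondary point to handle carefully is that the reduction ``$\F_n=normalize(c_n)$'' rests on the while-loop in Algorithm~\ref{alg:1} actually terminating at the MEB center, which is the content of Appendix~\ref{sec:app_proof_sample_invariance}.
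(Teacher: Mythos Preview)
Your argument is correct and structurally the same as the paper's: both identify the output of Algorithm~\ref{alg:1} with the center of the minimum enclosing ball of $S_n=\{E_X(x_i)\otimes E_Y(f(x_i))\}_{i=1}^n$, show $S_n\to S$ in Hausdorff distance almost surely via a density-of-samples argument on the compact domain $X$, and then deduce convergence of the centers. The only substantive difference is your Step~(ii). You establish H\"older-$\tfrac12$ stability of the MEB center under Hausdorff perturbation through the $2$-strong convexity of $u\mapsto\sup_{a\in A}\|u-a\|^2$, an argument that works for arbitrary pairs of bounded sets. The paper instead exploits the monotone inclusion $S_n\subset S$, hence $Ball(S_n)\subset Ball(S)$, and invokes the elementary nested-ball inequality $\|center(A)-center(B)\|\le radius(B)-radius(A)$ to get a \emph{linear} bound $\|c_n-c_\infty\|\le d_H(Ball(S_n),Ball(S))$ with no constants to track. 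Their route is shorter and gives a sharper rate; yours is more robust and would still apply if, say, the samples were noisy so that $S_n\not\subset S$. Your explicit separation of the unconstrained MEB center $c_n$ from the final $normalize$ step, together with the caveat about $c_\infty\neq 0$, is also cleaner than the paper's treatment, which tacitly identifies the sphere-constrained $\argmax\min$ with the unconstrained MEB center.
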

\vspace{-5pt}
HDFE being sample invariant ensures functions realized with different sampling schemes are treated invariantly. Kindly refer to Appendix \ref{sec:app_proof_sample_invariance} and \ref{sec:app_experiment_sample_invariance} for the proof and empirical experiments.
% The sample density and distribution invariance ensures the unambiguous encoding of the functions.

\vspace{3pt}
\begin{theorem}[Distance Preserving]
    Let $f, g:X\rightarrow Y$ be both $c$-Lipschitz continuous, then their L2-distance is preserved in the encoding. In other words, HDFE is an isometry:
    \[||f-g||_{L_2}=\int_{x\in X} \big|f(x)-g(x)\big|^2dx \approx b-a\langle \F, \mathbf{G}\rangle\]\label{thm:isometry}
\end{theorem}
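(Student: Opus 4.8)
The plan is to compute $\langle \F, \mathbf{G}\rangle$ directly from the definitions in \eqref{eqn:final} and \eqref{eqn:FPE}, pass to the asymptotic (large sample) limit using Theorem 1, and identify the resulting integral with a strictly decreasing function of $\|f-g\|_{L_2}$. Since $\F$ and $\mathbf{G}$ are normalized superpositions, write $\F \approx \frac{1}{Z_f}\int_X E_X(x)\otimes E_Y(f(x))\,d\mu_f(x)$ where $\mu_f$ is the limiting (re-weighted, distribution-invariant) measure guaranteed by Theorem 1, and similarly for $\mathbf{G}$. Then $\langle \F,\mathbf{G}\rangle$ is, up to the normalizing constants, a double integral of $\langle E_X(x)\otimes E_Y(f(x)),\, E_X(x')\otimes E_Y(g(x'))\rangle$. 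Using the similarity-preserving property of $\otimes$ together with the concrete FPE form $E_X(\x)=\exp(i\alpha\Phi\x/m)$, $E_Y(y)=\exp(i\beta\Psi y)$, the inner product in $\mathbb{C}^N$ concentrates (as $N\to\infty$, by a standard random-features / law-of-large-numbers argument over the rows of $\Phi,\Psi$) to a product of Gaussian-type kernels $k_X(x,x')\,k_Y(f(x),g(x'))$, where $k_X$ has bandwidth set by $\alpha$ and $k_Y$ by $\beta$ — this is exactly the FPE–RBF correspondence the paper defers to Appendix~\ref{sec:app_FPE_RBF}, which I would invoke.

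The key steps, in order: (i) replace the finite sums by their asymptotic integral form via Theorem 1, so the statement becomes a claim about functionals of $f$ and $g$ only; (ii) expand $\langle\F,\mathbf{G}\rangle$ into a double integral and apply similarity preservation to factor the integrand as $k_X(x,x')\cdot k_Y(f(x),g(x'))$; (iii) invoke the random-feature concentration so these kernels are the genuine RBF kernels $\exp(-\|x-x'\|^2/2\sigma_X^2)$ and $\exp(-(f(x)-g(x'))^2/2\sigma_Y^2)$; (iv) exploit the receptive-field regime — when $\sigma_X$ is small relative to the Lipschitz scale, $k_X(x,x')$ acts as an approximate identity on the diagonal, collapsing the double integral to $\int_X k_Y(f(x),g(x))\,dx$ up to the normalization, i.e.\ $\langle\F,\mathbf{G}\rangle \approx \frac{1}{|X|}\int_X \exp\!\big(-|f(x)-g(x)|^2/2\sigma_Y^2\big)dx$; (v) linearize: for the distances arising from $c$-Lipschitz functions with the chosen $\beta$, $\exp(-t^2/2\sigma_Y^2)\approx 1 - t^2/2\sigma_Y^2$ on the relevant range, giving $\langle\F,\mathbf{G}\rangle \approx 1 - \frac{1}{2\sigma_Y^2|X|}\|f-g\|_{L_2}^2$, which rearranges to $\|f-g\|_{L_2}^2 \approx b - a\langle\F,\mathbf{G}\rangle$ with $a = 2\sigma_Y^2|X|$ and $b = 2\sigma_Y^2|X|$. (The normalization of $\F$ ensures $\langle\F,\F\rangle=1$, pinning down $b-a = \|f-f\|=0$, a useful sanity check.)

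The main obstacle is step (iv): controlling the error when the $x$-side kernel $k_X(x,x')$ is treated as a Dirac mass. Off the diagonal the contribution is not literally zero — it is a genuine correlation between $f(x)$ and $g(x')$ at nearby points — so one must quantify the residual using Lipschitz continuity (values at points within $\epsilon_0$ differ by at most $c\epsilon_0$) and show it is lower-order in the receptive-field parameter. This is precisely where the $c$-Lipschitz hypothesis is essential and where the "$\approx$" in the theorem statement is doing real work; I would make this rigorous by splitting $X\times X$ into a near-diagonal band and its complement, bounding the band via a Taylor estimate in $x'-x$ and the complement via the rapid ($\sim$ Gaussian) decay of $k_X$. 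A secondary, more routine difficulty is tracking how the iterative-refinement re-weighting (the $w_i$) interacts with the kernel integral — but since Theorem 1 already certifies the limiting measure is distribution-independent and supported on the whole of $X$, it suffices to absorb it into the definition of the limiting measure and the normalizing constant, without needing its explicit form.
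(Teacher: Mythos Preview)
Your proposal is correct and follows essentially the same route as the paper's proof: expand $\langle\F,\mathbf{G}\rangle$ as a double integral over $X\times X$, split into near-diagonal and far-from-diagonal contributions (the paper does this with a hard cutoff at the receptive field $\epsilon_0$ and calls the complement ``noise'', you phrase it as $k_X$ acting as an approximate identity), collapse to $\int_X\langle E_Y(f(x)),E_Y(g(x))\rangle\,dx$ via similarity preservation, then invoke the FPE--RBF correspondence and truncate the resulting series at the quadratic term. Your version is in fact more careful than the paper's---you explicitly identify where the Lipschitz hypothesis enters (controlling the near-diagonal residual in step~(iv)) and pin down the constants $a,b$, whereas the paper leaves both implicit; one small inaccuracy is that the FPE kernel is the heavy-tailed variant $\sum_k(-1)^k t^{2k}/(2k)!!$ rather than the true Gaussian $e^{-t^2/2}$, but since only the constant and quadratic terms are used this does not affect the argument.
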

\vspace{-10pt}
HDFE being isometric indicates that HDFE encodes functions into a organized embedding space, which can reduce the complexity of the machine learning architecture when training downstream tasks on the functions. Kindly refer to Appendix \ref{sec:app_isometry} and \ref{sec:app_experiment_isometry} for the proof and empirical experiment.

\noindent\textbf{Effect of receptive field} ~~
Fig. \ref{fig:fig2} shows the reconstruction results of a 1d function $f:\mathbb{R}\rightarrow\mathbb{R}$, which demonstrates that HDFE can reconstruct the original functions given a suitable receptive field and a sufficiently large embedding space. When using a large receptive field (Fig. \ref{fig:fig2b}), the high-frequency components will be missed by HDFE. When using a small receptive field (Fig. \ref{fig:fig2c}), the high-frequency components can be captured, but it may cause incorrect reconstruction if the dimension of the embedding space is not large enough. Fortunately, reconstruction failures can be eliminated by increasing the dimension of the embedding space (Fig. \ref{fig:fig2d}).

\begin{figure}[H]
     \centering
     \begin{subfigure}[b]{0.22\textwidth}
         \centering
         \includegraphics[height=3cm]{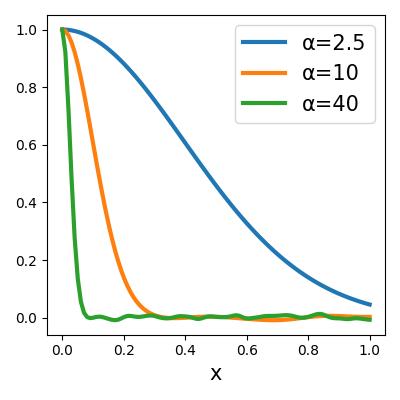}
         \caption{Similarity between $E_X(x)$ and $E_X(0)$.}
         \label{fig:fig2a}
     \end{subfigure}
     \hfill
     \begin{subfigure}[b]{0.22\textwidth}
         \centering
         \includegraphics[height=3cm]{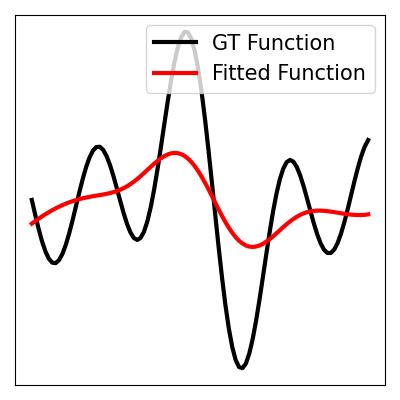}
         \caption{Large recept. field.\\Dimension $=$ 1000.}
         \label{fig:fig2b}
     \end{subfigure}
     \hfill
     \begin{subfigure}[b]{0.22\textwidth}
         \centering
         \includegraphics[height=3cm]{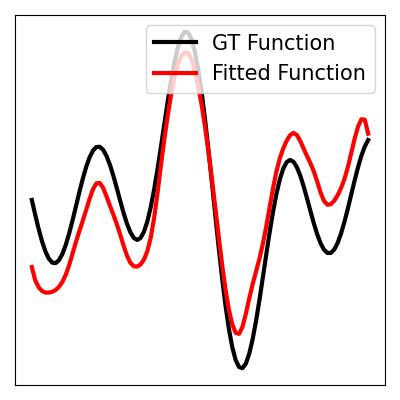}
         \caption{Small recept. field.\\Dimension $=$ 1000.}
         \label{fig:fig2c}
     \end{subfigure}
     \hfill
     \begin{subfigure}[b]{0.22\textwidth}
         \centering
         \includegraphics[height=3cm]{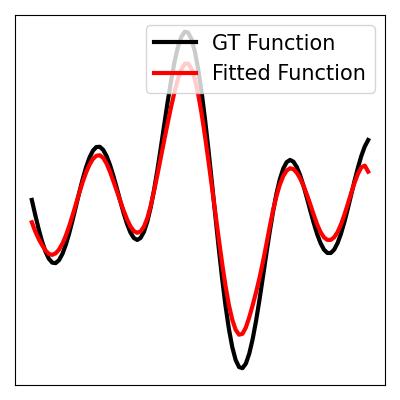}
         \caption{Small recept. field.\\Dimension $=$ 2000.}
         \label{fig:fig2d}
     \end{subfigure}
    \vspace{-3pt}
    \caption{(a): How $\alpha$ and $\beta$ in \eqref{eqn:FPE} affects the receptive field of HDFE. (b)-(d): Functions can be reconstructed accurately given a suitable receptive field and encoding dimension. To capture the high-frequency component of the function, a small receptive field and a high dimension are required.
    % A small receptive field can capture the high-frequency component of the function, but it requires a higher dimensionality of the function encoding.
    }
    \label{fig:fig2}
    \vspace{-10pt}
\end{figure}

\noindent\textbf{Scale to high-dimensional input} ~ HDFE produces the function encoding based on the sparsely collected samples. Unlike mesh-grid based methods, which require a mesh-grid domain and suffer from an exponential increase in memory and computational cost as the data dimension increases, HDFE uses superposition to encode all the samples defined in the support of the function. This means the required dimensionality only depends on the size of the support, not on the data dimensionality. Even if the data dimensionality is high, HDFE can mitigate this issue as long as the data reside in a low-rank subspace. Appendix \ref{sec:app_experiment_high_dimension} gives an empirical experiment of high-dimensional input to show the potential of HDFE to work in low-rank high-dimensional scenarios.

\label{sec:properties}

\section{Experiment}
In this section, we present two applications of HDFE. Sec. \ref{sec:PDE_solver} showcases how HDFE can be leveraged for solving partial differential equations (PDE). This exemplifies how HDFE can enhance neural networks to receive function inputs and produce function outputs. In Sec. \ref{sec:normal_estimation}, we apply HDFE to predict the surface normal of point clouds. 
This demonstrates how HDFE can enhance neural networks to process implicit functions and extract relevant attributes. 

% In this section, we present two applications of HDFE. Section \ref{sec:PDE_solver} presents HDFE's application in solving partial differential equations (PDE), showing HDFE's ability to handle function inputs and outputs. Section \ref{sec:normal_estimation} presents HDFE's application in predicting the surface normals of point clouds, showing its ability to extract attributes from implicit functions.
\subsection{PDE Solver}
\label{sec:PDE_solver}
Several neural networks have been developed to solve partial differential equations (PDE), such as the Fourier neural operator \citep{li2020fourier}. In this section, we compare our approach using HDFE against the current approaches and show that we achieve on-par performance. 
VFA does not apply to the problem since the input and output functions do not conform to the form that VFA requires.

\textbf{Architecture} ~ To solve PDEs using neural networks, we first encode the PDE and its solution into their vector embeddings using HDFE. Then, we train a multi-layer perceptron to map the embedding of the PDE to the embedding of its solution. The optimization target is the cosine similarity between the predicted embedding and the true embedding. Since the embeddings are complex vectors, we adopt a Deep Complex Network \citep{trabelsi2017deep} as the architecture of the multi-layer perceptron. The details are presented in Appendix \ref{sec:app_PDE_solver_details}. Once the model is trained, we use it to predict the embedding of the solution, which is then decoded to obtain the actual solution.

\textbf{Dataset} ~ We use 1d Burgers' Equation \citep{su1969korteweg} and 2d Darcy Flow \citep{tek1957development} for evaluating our method. The error is measured by the absolute distance between the predicted solution and the ground-truth solution. The benchmark \citep{li2020multipole} has been used to evaluate neural operators widely. For the 1d Burgers' Equation, it provides 2048 PDEs and their solutions, sampled at a 1d mesh grid at a resolution of $8192$. For the 2d Darcy Flow, it provides 2048 PDEs and their solutions, sampled at a 2d mesh grid at a resolution of $241\times 241$. 
% We split the data into a 70\% training set, a 15\% validation set, and a 15\% testing set randomly.

\textbf{Baselines} ~We evaluate our HDFE against other neural network PDE-solving methods. These include: \underline{PCANN} \citep{bhattacharya2021model}; \underline{MGKN}: Multipole Graph Neural Operator \citep{li2020multipole}; \underline{FNO}: Fourier Neural Operator \citep{li2020fourier}.

\begin{figure}[h]
     \centering
     \begin{subfigure}[b]{0.32\textwidth}
         \centering
         \includegraphics[width=\textwidth]{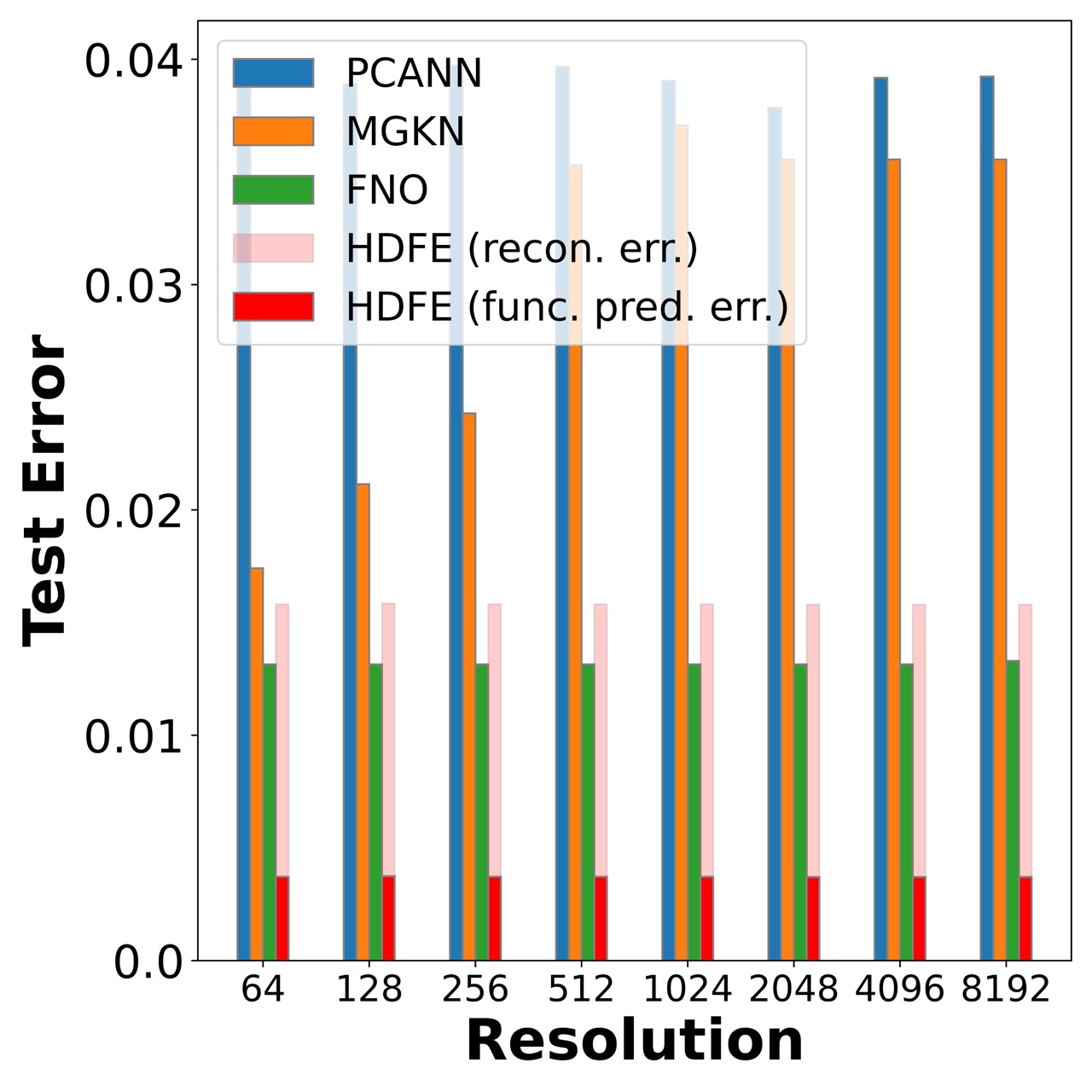}
     \end{subfigure}
     \hfill
     \begin{subfigure}[b]{0.32\textwidth}
         \centering
         \includegraphics[width=\textwidth]{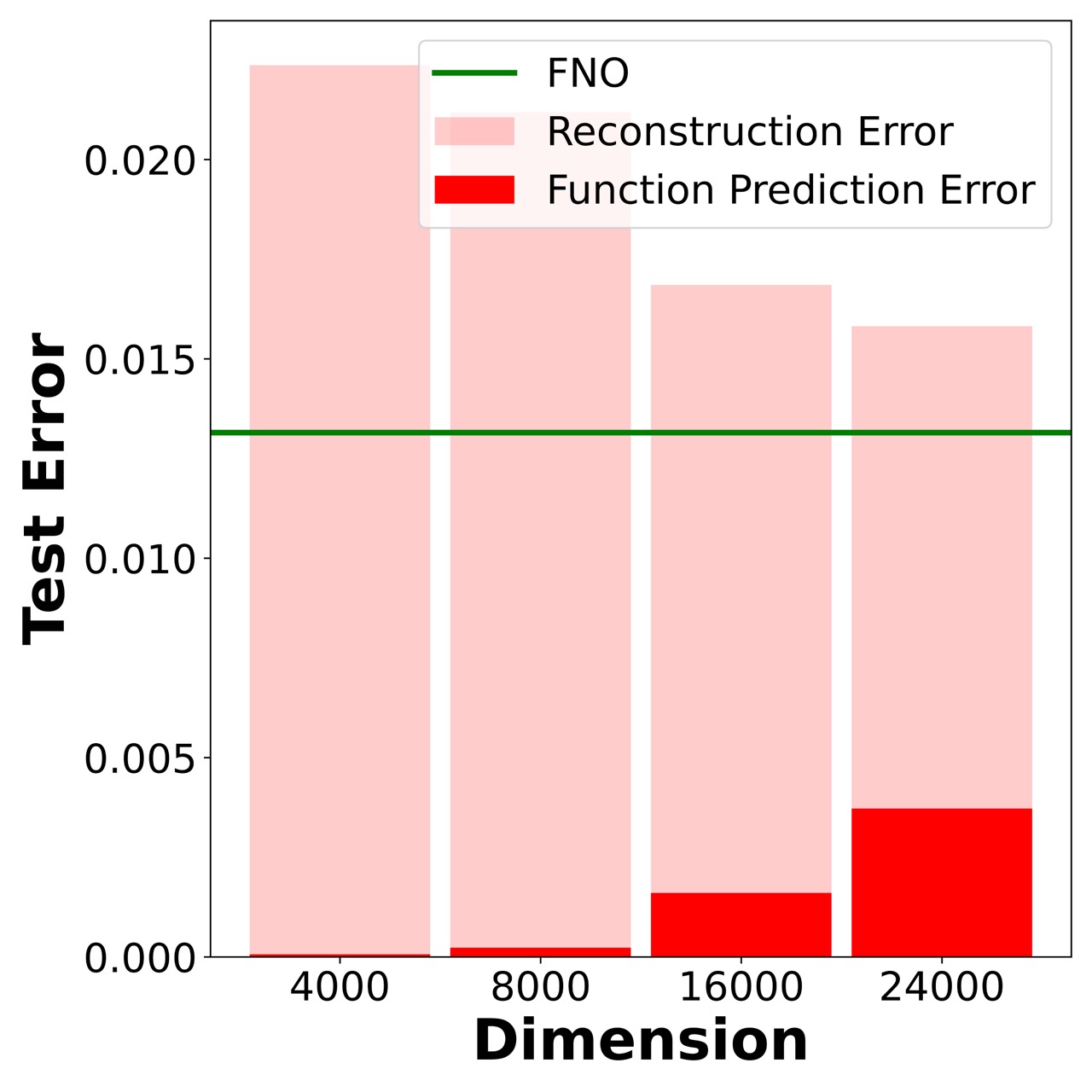}
     \end{subfigure}
     \hfill
     \begin{subfigure}[b]{0.32\textwidth}
         \centering
         \includegraphics[width=\textwidth]{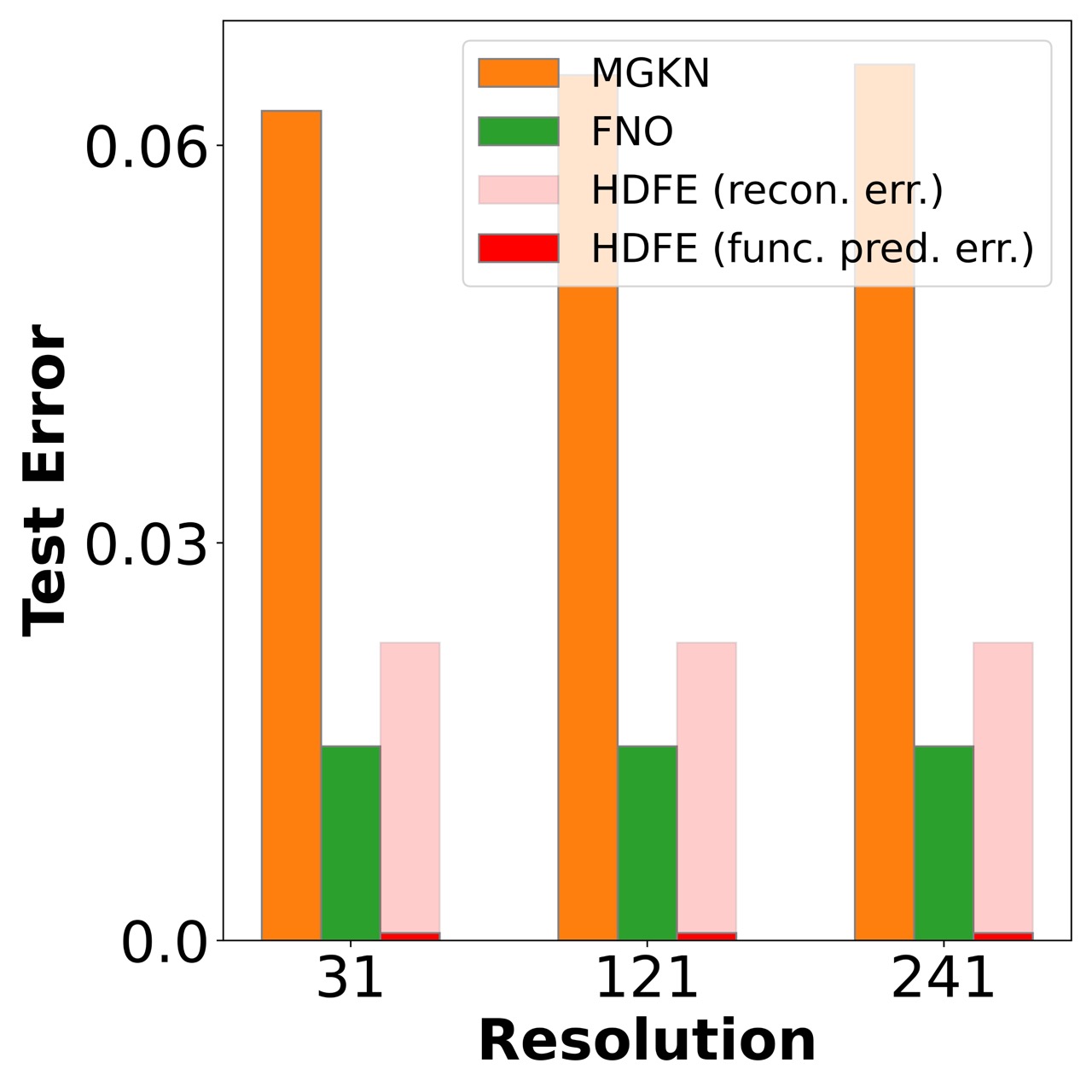}
     \end{subfigure}
         \vspace{-5pt}
    \caption{HDFE solves a PDE by predicting the encoding of its solution and then reconstructing at points, so the error consists of a function encoding prediction error and a reconstruction error. \textbf{Left}: Prediction error of different methods under different testing resolutions, evaluated on the 1d Burgers' equation. \textbf{Mid}: The reconstruction error (in HDFE) dominates the function encoding prediction error, while the reconstruction error can be reduced by increasing the dimensionality of the embedding. \textbf{Right}: Prediction error of different methods evaluated on 2d Darcy Flow.}
    \vspace{-3pt}
    \label{fig:fig4}
\end{figure}

\textit{When decoding is required}, \textbf{our approach achieves $\sim 55\%$ lower prediction error than MGKN and PCANN and competitive performance to FNO}. The error of HDFE consists of two components. The first is the error arising from predicting the solution embedding, and the second is the reconstruction error arising when decoding the solution from the predicted embedding. In contrast, FNO directly predicts the solution, and hence, does not suffer from reconstruction error. If we consider both errors, 
HDFE achieves comparable performance to FNO. Fig. \ref{fig:fig4} shows the comparison. 
% Appendix \ref{sec:app_qual} shows some qualitative results

On the other hand, \textit{when decoding is not required}, \textbf{our approach achieves lower error than FNO}. Such scenarios happen frequently when we use functions only as input, for example, the local geometry prediction problem in Experiment \ref{sec:normal_estimation}. Despite the presence of reconstruction error, \textbf{the reconstruction error can be reduced by increasing the embedding dimension}, as shown in Figure \ref{fig:fig4} (Mid). Increasing the embedding dimension may slightly increase the function prediction error, possibly because the network is not adequately trained due to limited training data and some overfitting. We conjecture that this prediction error can be reduced with more training data.

In addition to comparable performance, \textbf{HDFE overcomes two limitations of FNO}. First, HDFE provides an explicit function representation, resolving the restriction of FNO, which only models the mappings between functions without extracting attributes from them. Second, HDFE not only works for grid-sampled functions but also  for sparsely sampled functions.

\subsection{Unoriented Surface Normal Estimation}
\label{sec:normal_estimation}
Next, we apply HDFE to extract attributes of functions, a setting where neither neural operators nor VFA applies, because  neural operators do not consume sparse samples and VFA does not encode implicit functions. We predict the unoriented surface normal from 3d point cloud input.

% \textbf{Architecture} ~ 3d point clouds can be regarded as samples from implicit functions and therefore can be encoded using \eqref{eqn:implicit}.
% For every point in the point cloud, we encode the local patch (the nearest 700 neighboring points) into a vector using HDFE. Then we train a deep complex network \citep{trabelsi2017deep} to predict the local surface normal. Kindly refer to Appendix \ref{sec:app_normal_estimation_details} for details.

\textbf{Baselines} ~ We compare our HDFE with two baselines. In the first baseline, we compare the vanilla HDFE with the PCPNet \citep{guerrero2018pcpnet}, which is a vanilla PointNet \citep{qi2017pointnet} architecture. We replace the PointNet with our HDFE appended with a deep complex network \citep{trabelsi2017deep}. In the second baseline, we incorporate HDFE into  HSurf-Net \citep{li2022hsurf}, which is the state-of-the-art PointNet-based normal estimator. In both settings, we compare the effect of data augmentation in the HDFE module, where we add noise to the weight of each sample when generating the patch encoding by HDFE. Kindly refer to Appendix \ref{sec:app_normal_estimation_details} \ref{sec:app_hsurfnet_details} for details.
 
\textbf{Dataset and metrics} ~ We use the root mean squared angle error (RMSE) as the metrics, evaluated on the PCPNet \citep{guerrero2018pcpnet} and FamousShape \citep{li2023shs} datasets. We compare the robustness for two types of data corruption: (1) point density: sampling subsets of points with two regimes, where \textit{gradient} simulates the effects of distance from the sensor, and \textit{strips} simulates local occlusions. (2) point perturbations: adding Gaussian noise to the point coordinates. Table \ref{tab:normal_estimation_error} reports normal angle RMSE comparison with the baselines on PCPNet and FamousShape. Appendix \ref{sec:app_ablation} reports the ablation studies examining the effect of the receptive field size and the dimensionality.

% Please add the following required packages to your document preamble:
% \usepackage{multirow}
% \usepackage{graphicx}
% \usepackage[table,xcdraw]{xcolor}
% Beamer presentation requires \usepackage{colortbl} instead of \usepackage[table,xcdraw]{xcolor}
\begin{table}[h]
\centering
\caption{Unoriented normal RMSE results on datasets PCPNet and FamousShape. Replacing from PointNet to HDFE improves performance. Integrating HDFE with the SOTA estimator improves its performance. Applying data augmentation to HDFE improves its performance.}
\vspace{-7pt}
\label{tab:normal_estimation_error}
\resizebox{\textwidth}{!}{%
\begin{tabular}{l|ccccccc|ccccccc}
 &
  \multicolumn{7}{c|}{\textbf{PCPNet Dataset (\textcolor{red}{$\downarrow$} means improvement)}} &
  \multicolumn{7}{c}{\textbf{FamousShape Dataset (\textcolor{red}{$\downarrow$} means improvement)}} \\ \hline\hline
 &
  \multicolumn{4}{c|}{Noise} &
  \multicolumn{2}{c|}{Density} &
  \cellcolor[HTML]{EFEFEF} &
  \multicolumn{4}{c|}{Noise} &
  \multicolumn{2}{c|}{Density} &
  \cellcolor[HTML]{EFEFEF} \\ \cline{2-7} \cline{9-14}
 &
  None &
  Low &
  Med &
  \multicolumn{1}{c|}{High} &
  Stripe &
  \multicolumn{1}{c|}{Gradient} &
  \multirow{-2}{*}{\cellcolor[HTML]{EFEFEF}Average} &
  None &
  Low &
  Med &
  \multicolumn{1}{c|}{High} &
  Stripe &
  \multicolumn{1}{c|}{Gradient} &
  \multirow{-2}{*}{\cellcolor[HTML]{EFEFEF}Average} \\ \hline\hline
PCPNet \cite{guerrero2018pcpnet} &
  9.64 &
  11.51 &
  18.27 &
  \multicolumn{1}{c|}{22.84} &
  11.73 &
  \multicolumn{1}{c|}{13.46} &
  \cellcolor[HTML]{EFEFEF}14.58 &
  18.47 &
  21.07 &
  32.60 &
  \multicolumn{1}{c|}{39.93} &
  18.14 &
  \multicolumn{1}{c|}{19.50} &
  \cellcolor[HTML]{EFEFEF}24.95 \\
PCPNet - PointNet + HDFE &
  9.48 &
  11.05 &
  \textbf{17.16} &
  \multicolumn{1}{c|}{\textbf{22.53}} &
  11.61 &
  \multicolumn{1}{c|}{10.19} &
  \cellcolor[HTML]{EFEFEF}13.67 &
  15.66 &
  \textbf{17.92} &
  31.24 &
  \multicolumn{1}{c|}{38.89} &
  17.26 &
  \multicolumn{1}{c|}{14.55} &
  \cellcolor[HTML]{EFEFEF}22.59 \\
Difference &
  {\color[HTML]{FE0000} 0.16 $\downarrow$} &
  {\color[HTML]{FE0000} 0.46 $\downarrow$} &
  {\color[HTML]{FE0000} 1.11 $\downarrow$} &
  \multicolumn{1}{c|}{{\color[HTML]{FE0000} 0.31 $\downarrow$}} &
  {\color[HTML]{FE0000} 0.12 $\downarrow$} &
  \multicolumn{1}{c|}{{\color[HTML]{FE0000} 3.27 $\downarrow$}} &
  \cellcolor[HTML]{EFEFEF}{\color[HTML]{FE0000} 0.91 $\downarrow$} &
  {\color[HTML]{FE0000} 2.81 $\downarrow$} &
  {\color[HTML]{FE0000} 3.15 $\downarrow$} &
  {\color[HTML]{FE0000} 1.36 $\downarrow$} &
  \multicolumn{1}{c|}{{\color[HTML]{FE0000} 1.04 $\downarrow$}} &
  {\color[HTML]{FE0000} 0.88 $\downarrow$} &
  \multicolumn{1}{c|}{{\color[HTML]{FE0000} 4.95 $\downarrow$}} &
  \cellcolor[HTML]{EFEFEF}{\color[HTML]{FE0000} 2.36 $\downarrow$} \\ \hline
PCPNet - PointNet + HDFE + Aug. &
  \textbf{7.97} &
  \textbf{10.72} &
  17.69 &
  \multicolumn{1}{c|}{22.76} &
  \textbf{9.47} &
  \multicolumn{1}{c|}{\textbf{8.67}} &
  \cellcolor[HTML]{EFEFEF}\textbf{12.88} &
  \textbf{13.04} &
  17.99 &
  \textbf{31.23} &
  \multicolumn{1}{c|}{\textbf{38.57}} &
  \textbf{14.01} &
  \multicolumn{1}{c|}{\textbf{12.13}} &
  \cellcolor[HTML]{EFEFEF}\textbf{21.16} \\
Difference &
  {\color[HTML]{FE0000} 1.67 $\downarrow$} &
  {\color[HTML]{FE0000} 0.79 $\downarrow$} &
  {\color[HTML]{FE0000} 0.58 $\downarrow$} &
  \multicolumn{1}{c|}{{\color[HTML]{FE0000} 0.08 $\downarrow$}} &
  {\color[HTML]{FE0000} 2.26 $\downarrow$} &
  \multicolumn{1}{c|}{{\color[HTML]{FE0000} 4.79 $\downarrow$}} &
  \cellcolor[HTML]{EFEFEF}{\color[HTML]{FE0000} 1.70 $\downarrow$} &
  {\color[HTML]{FE0000} 5.43 $\downarrow$} &
  {\color[HTML]{FE0000} 3.08 $\downarrow$} &
  {\color[HTML]{FE0000} 1.37 $\downarrow$} &
  \multicolumn{1}{c|}{{\color[HTML]{FE0000} 1.36 $\downarrow$}} &
  {\color[HTML]{FE0000} 4.13 $\downarrow$} &
  \multicolumn{1}{c|}{{\color[HTML]{FE0000} 7.37 $\downarrow$}} &
  \cellcolor[HTML]{EFEFEF}{\color[HTML]{FE0000} 3.79 $\downarrow$} \\ \hline\hline
HSurf-Net \cite{li2022hsurf} &
  4.30 &
  8.78 &
  16.15 &
  \multicolumn{1}{c|}{\textbf{21.64}} &
  5.18 &
  \multicolumn{1}{c|}{5.03} &
  \cellcolor[HTML]{EFEFEF}10.18 &
  7.54 &
  15.56 &
  29.47 &
  \multicolumn{1}{c|}{38.61} &
  7.82 &
  \multicolumn{1}{c|}{7.44} &
  \cellcolor[HTML]{EFEFEF}17.74 \\
HSurf-Net + HDFE &
  4.13 &
  \textbf{8.64} &
  \textbf{16.14} &
  \multicolumn{1}{c|}{\textbf{21.64}} &
  5.02 &
  \multicolumn{1}{c|}{4.87} &
  \cellcolor[HTML]{EFEFEF}10.07 &
  7.46 &
  \textbf{15.50} &
  \textbf{29.42} &
  \multicolumn{1}{c|}{\textbf{38.56}} &
  7.77 &
  \multicolumn{1}{c|}{7.35} &
  \cellcolor[HTML]{EFEFEF}17.68 \\
Difference &
  {\color[HTML]{FE0000} 0.17 $\downarrow$} &
  {\color[HTML]{FE0000} 0.14 $\downarrow$} &
  {\color[HTML]{FE0000} 0.01 $\downarrow$} &
  \multicolumn{1}{c|}{0.00} &
  {\color[HTML]{FE0000} 0.16 $\downarrow$} &
  \multicolumn{1}{c|}{{\color[HTML]{FE0000} 0.16 $\downarrow$}} &
  \cellcolor[HTML]{EFEFEF}{\color[HTML]{FE0000} 0.11 $\downarrow$} &
  {\color[HTML]{FE0000} 0.08 $\downarrow$} &
  {\color[HTML]{FE0000} 0.06 $\downarrow$} &
  {\color[HTML]{FE0000} 0.04 $\downarrow$} &
  \multicolumn{1}{c|}{{\color[HTML]{FE0000} 0.05 $\downarrow$}} &
  {\color[HTML]{FE0000} 0.05 $\downarrow$} &
  \multicolumn{1}{c|}{{\color[HTML]{FE0000} 0.09 $\downarrow$}} &
  \cellcolor[HTML]{EFEFEF}{\color[HTML]{FE0000} 0.06 $\downarrow$} \\ \hline
HSurf-Net + HDFE + Aug. &
  \textbf{3.89} &
  8.78 &
  \textbf{16.14} &
  \multicolumn{1}{c|}{21.65} &
  \textbf{4.60} &
  \multicolumn{1}{c|}{\textbf{4.51}} &
  \cellcolor[HTML]{EFEFEF}\textbf{9.93} &
  \textbf{7.11} &
  15.57 &
  29.44 &
  \multicolumn{1}{c|}{38.57} &
  \textbf{6.97} &
  \multicolumn{1}{c|}{\textbf{6.98}} &
  \cellcolor[HTML]{EFEFEF}\textbf{17.44} \\
Difference &
  {\color[HTML]{FE0000} 0.41 $\downarrow$} &
  0.00 &
  {\color[HTML]{FE0000} 0.01 $\downarrow$} &
  \multicolumn{1}{c|}{{\color[HTML]{34FF34} 0.01 $\uparrow$}} &
  {\color[HTML]{FE0000} 0.58 $\downarrow$} &
  \multicolumn{1}{c|}{{\color[HTML]{FE0000} 0.52 $\downarrow$}} &
  \cellcolor[HTML]{EFEFEF}{\color[HTML]{FE0000} 0.25 $\downarrow$} &
  {\color[HTML]{FE0000} 0.43 $\downarrow$} &
  {\color[HTML]{34FF34} 0.01 $\uparrow$} &
  {\color[HTML]{FE0000} 0.03 $\downarrow$} &
  \multicolumn{1}{c|}{{\color[HTML]{FE0000} 0.04 $\downarrow$}} &
  {\color[HTML]{FE0000} 0.85 $\downarrow$} &
  \multicolumn{1}{c|}{{\color[HTML]{FE0000} 0.46 $\downarrow$}} &
  \cellcolor[HTML]{EFEFEF}{\color[HTML]{FE0000} 0.30 $\downarrow$}
\end{tabular}%
\vspace{-21pt}
}
\end{table}

% \begin{wrapfigure}{R}{0.5\textwidth}
% \vspace{-28pt}
% \begin{minipage}{0.5\textwidth}
% \begin{figure}[H]
%     \includegraphics[width=\textwidth]{}
% \end{figure}
% \vspace{-20pt}
% \caption{Vanilla HDFE achieves competitive accuracy as the SOTA algorithms \citep{li2022hsurf, li2023shs} on the high-noise setting.}
% \label{fig:normal_qual}
% \vspace{-20pt}
% \end{minipage}
% \end{wrapfigure}

\textbf{HDFE significantly outperforms the PointNet baseline.} When processing the local patches, we replace PointNet with HDFE followed by a neural network.
% PCPNet receives local patches and employs PointNet to predict the normals of the patches. As its counterpart, we replace PointNet with HDFE, followed by a neural network. 
This replacement leads to an average reduction in error of 1.70 and 3.79 on each dataset. 
This is possibly because HDFE encodes the distribution of the local patch, which is guaranteed by the decodability property of HDFE. PointNet, on the other hand, does not have such guarantee. Specifically, PointNet aggregates point cloud features through a max-pooling operation, which may omit points within the point cloud and fail to adequately capture the patch's distribution. Consequently, in tasks where modeling the point cloud distribution is crucial, such as normal estimation, PointNet exhibits higher error compared to HDFE.

% Besides, this vanilla HDFE network even achieves on-par performance as SOTA when the noise level is high, as shown in Fig. \ref{fig:normal_qual}. 

\textbf{HDFE, as a plug-in module, improves the SOTA baseline significantly}.
% \textbf{Adding HDFE module to the SOTA network performs better than any other method} when the noise level is high. 
HSurf-Net \citep{li2022hsurf}, the SOTA method in surface normal estimation, introduces many features, such as local aggregation layers, and global shift layers specifically for the task. Notably, HDFE does not compel such features. We incorporate HDFE into HSurf-Net (See Appendix \ref{sec:app_hsurfnet_details} for details), where it leads to average error reductions of 0.25/0.30 on each dataset. Notably, such incorporation can be performed on any PointNet-based architecture across various tasks. Incorporating HDFE to other PointNet-based architectures for performance and robustness gain can be a future research direction. 
% Such finding indicates that \textbf{HDFE presents a promising complement to PointNet and its variation for processing point cloud data}.

\textbf{HDFE promotes stronger robustness to point density variance.} In both comparisons and both benchmarks, HDFE exhibits stronger robustness to point density variation than its PointNet counterpart, especially in the Density-Gradient setting (error reduction of 4.79/7.37/0.52/0.46). This shows the effectiveness of the HDFE's sample invariance property and the embedding augmentation. Sample invariance ensures a stable encoding of local patches when the point density changes. The embedding augmentation is a second assurance to make the system more robust to density variation.

\section{Related Work}
\label{sec:related_work}
\vspace{-5pt}
\subsection{Mesh-grid-based framework}
These methods operate on continuous objects (functions) defined on mesh grids with arbitrary resolution. They enjoy discretization invariance, but they do not receive sparse samples as input.

\textbf{Fourier transform} 
%Fourier transform 
(FT) can map functions from their time domains to their frequency domains. By keeping a finite number of frequencies, FT can provide a vector representation of functions. %By adopting this strategy, 
1D-FT has been a standard technique for signal processing \citep{salih2012fourier} and 2D-FT has been used for image processing \citep{jain1989fundamentals}. FT is also incorporated into deep learning architectures \citep{fan2019bcr, sitzmann2020implicit}. However, FT is not scalable since the $n$-dimensional Fourier transform returns an $n$-dimensional matrix, which is hard to process when $n$ gets large.
% However, the application of FT is limited to low-dimensional scenarios, which may be caused by its two major limitations. First, the input function must be defined on a grid. In other words, FT does not accept sparse inputs. This makes it not scalable to high-dimensional input. Second, the $n$-dimensional Fourier transform returns an $n$-dimensional matrix, which is hard to process when the dimensionality gets large.

\textbf{Neural operator} 
%Neural operator
is a set of toolkits to model the mapping between function spaces. The technique was pioneered in DeepONet \citep{lu2019deeponet} and a series of tools were developed \citep{li2021physics, li2020neural, li2020multipole, guibas2021adaptive, kovachki2021neural} for studying the problem. The most well-known work is the Fourier Neural Operator (FNO) \citep{li2020fourier}. The approach showed promising accuracy and computational efficiency.  Though proposed in 2020, FNO is still the first choice when mapping between function spaces \citep{wen2023real, renn2023forecasting, gopakumar2023fourier}.
Despite their success,
neural operators lack explicit function representations and their application is limited to mappings between function spaces.

\subsection{Sparse framework}
\label{sec:sparse_framework}
These methods, including HDFE, work with sparse samples from continuous objects.

\textbf{PointNet} 
%PointNet 
\citep{qi2017pointnet} is a neural network architecture for processing point cloud data. It uses multi-layer perceptrons to capture local features of every point  and then aggregates them into a global feature vector that's invariant to the order of the input points.
% PointNet  is a neural network architecture designed for processing point cloud data. The input point cloud is passed through several shared multi-layer perceptrons that capture local features of the point cloud. Then a symmetric function aggregates the features of individual points into a global feature vector, making it invariant to the order of input points.
PointNet and its variation \citep{zaheer2017deep, qi2017pointnet++, joseph2019momen, yang2019modeling, zhao2019pointweb, duan2019structural, yan2020pointasnl} have been widely applied to sparse data processing, for example, for object classification \citep{yan2020pointasnl, lin2019justlookup}, semantic segmentation \citep{ma2020global, li2019mvp}, and object detection \citep{qi2020imvotenet, yang20203dssd} with point cloud input. 
However, PointNet does not produce a decodable representation. Specifically, after encoding a point cloud with PointNet, it is difficult to decide whether a point is drawn from the point cloud distribution. Besides, PointNet is also sensitive to perturbations in the input point cloud.

\textbf{Kernel methods} 
%Kernel methods 
\citep{hofmann2008kernel} are a type of machine learning algorithm that transforms data into a higher-dimensional feature space via a kernel function, such as the radial basis function (RBF) \citep{cortes1995support}, which can capture nonlinear relationships.
% Kernel methods \cite{hofmann2008kernel} defines a kernel function that measures the similarity between pairs of data points in a high-dimensional feature space. Then the kernel function can implicitly map the data into the feature space, without actually computing the feature vectors explicitly. Then linear methods can be used to perform various tasks in the feature space, such as classification, regression, and clustering. 
% The most well-known kernel functions are radial basis functions (RBF) . 
Though kernel methods can predict function values at any query input (i.e. decodable) and the prediction is invariant to the size and distribution of the training data, kernel methods do not produce an explicit representation of functions, so they are only used for fitting functions but not processing or predicting functions.

\textbf{Vector function architecture} VFA \citep{frady2021computing} encodes a function of the form $f(x)=\sum_k \alpha_k\cdot K(x,x_k)$ into a vector, where $K:X\times X\rightarrow \mathbb{R}$ is a kernel defined on the input space. VFA and HDFE share a similar high-level idea. They both map the samples to high-dimensional space and compute the weighted average in that space. However VFA determines the weight by relying on the assumption of the functional form. HDFE, on the other hand, uses iterative refinement to solve the weights. The iterative refinement coupled with the binding operation relaxes the assumption required by VFA and enables encoding functions across a larger class of inputs. In addition, VFA is limited to empirical experiments such as non-linear regression and density estimation, without practical applications. In comparison, HDFE is demonstrated to be applicable to real-world problems. 
% Kindly refer to Appendix \ref{sec:app_VFA} for a detailed comparison between VFA and HDFE.
% While VFA is unique in fulfilling the desired properties of sample invariance and decodability, its reliance on a specific functional form limits its applicability. In contrast, HDFE approaches function encoding from a different perspective, enabling the encoding of functions across a larger class of inputs. Kindly refer to Appendix \ref{sec:app_VFA} for detailed comparison and relation between VFA and HDFE.

\section{Conclusion}
\label{sec:conclusion}
% We propose Hyper-Dimensional Function Encoding (HDFE), which constructs a vector representation for function mappings and continuous objects. The representation is invariant to the sample distribution and density, decodable, and distance preserving. Having these properties, HDFE provides an interface for neural networks to process continuous objects efficiently. We found that using a vanilla neural network, HDFE achieves significantly lower error compared with other PointNet-based methods. It indicates HDFE has the promising potential in sparse data processing, e.g. point cloud, event streams, etc.. Applying HDFE to point cloud tasks such as point cloud classification, segmentation can be a broad topic to explore.

% We found HDFE has its limitation on the capacity of the encoding. When encoding functions defined on a large support or when encoding highly non-linear functions, HDFE suffers from underfitting even with large encoding dimension. How to increase the capacity of HDFE is also a future topic to study. Nonetheless, HDFE already demonstrates strong applicability in low-dimensional inputs (1d, 2d, 3d).

\dy{We introduced Hyper-Dimensional Function Encoding (HDFE), which constructs vector representations for continuous objects. The representation, without any training, is sample invariant, decodable, and isometric. These properties position HDFE as an interface for the processing of continuous objects by neural networks. Our study demonstrates that the HDFE-based architecture attains significantly reduced errors compared to PointNet-based counterparts, especially in the presence of density perturbations. This reveals that HDFE presents a promising complement to PointNet and its variations for processing point cloud data. Adapting HDFE (e.g. imposing rotational invariance to HDFE) to tasks like point cloud classification and segmentation offers promising avenues for exploration.} \dy{Still, HDFE does possess limitations in encoding capacity. For functions defined over large domains or highly non-linear functions, HDFE can experience underfitting. The exploration of techniques to enhance HDFE's capacity remains promising research. Regardless, HDFE already shows strong applicability in low-dimensional (1D, 2D, 3D) inputs.}

\section{Acknowledgement}
The support of NSF under awards OISE 2020624 and BCS 2318255, and ARL under the Army Cooperative Agreement W911NF2120076 is greatly acknowledged. The in-depth discussions with Denis Kleyko, Paxon Frady, Bruno Olshausen, Christopher Kymn, Friedrich Sommer, Pentti Kanerva significantly improved the manuscript and we are grateful.
\bibliography{iclr2024_conference}
\bibliographystyle{iclr2024_conference}

\newpage
\appendix
\begin{center}
    \Huge\textsc{Appendix}
\end{center}
\section{Notation Table}
\label{sec:app_notation}
\vspace{-7pt}
\begin{table}[htbp]
\begin{center}% used the environment to augment the vertical space
% between the caption and the table
\caption{Table of Notation}
\begin{tabular}{r c p{10cm} }
\toprule
\multicolumn{3}{c}{}\\
\multicolumn{3}{c}{\underline{Notations of Continuous Functions}}\\
\multicolumn{3}{c}{}\\
$F$ & $\triangleq$ & family of $c$-Lipschitz continuous functions\\
$c$ & $\triangleq$ & Lipschitz constant of the continuous function\\
$X$ & $\triangleq$ & input space, a bounded domain of the function\\
$Y$ & $\triangleq$ & output space, a bounded range of the function\\
$d_X$ & $\triangleq$ & distance metrics in the input space $X$ \\
$d_Y$ & $\triangleq$ & distance metrics in the output space $Y$ \\
$\{(x_i, f(x_i)\}$ & $\triangleq$ & collection of function samples.\\
$m$ & $\triangleq$ & dimensionality of the input space $X$.\\

\multicolumn{3}{c}{}\\
\multicolumn{3}{c}{\underline{Notations of HDFE}}\\
\multicolumn{3}{c}{}\\
$\epsilon_0$ & $\triangleq$ & receptive field of HDFE.\\
$\F$ & $\triangleq$ & $N$-dimensional complex vector. Encoding of the explicit function $f$. \\
$\F_{f=0}$ & $\triangleq$ & $N$-dimensional complex vector. Encoding of the implicit function $f(x)=0$. \\
$N$ & $\triangleq$ & dimensionality of the function encoding.\\
$E_X, E_Y$ & $\triangleq$ & the input and output mapping from $X, Y$ to the encoding space $\mathbb{C}^N$.\\
$\otimes$ & $\triangleq$ & binding operation.\\
$\oslash$ & $\triangleq$ & unbinding operation.\\
$\langle \cdot, \cdot \rangle$ & $\triangleq$ & cosine similarity between two complex vectors.\\
\bottomrule
\end{tabular}
\end{center}
\label{tab:TableOfNotationForMyResearch}
\end{table}
\vspace{-7pt}

\section{PointNet  as Function Encoder}
\label{sec:app_pointnet}
% In this section, we provide empirical evidence that PointNet struggles to produce decodable function encodings. The pipeline we design follows the most popular reconstruction network style training and we show it fail to generate even reasonable reconstruction. So we assume that it is not likely for PointNet to equip the capability without significant changes.

In this section, we present empirical evidence highlighting PointNet's limitations in generating decodable function encodings. Our designed pipeline, following the popular style of training reconstruction networks, demonstrates PointNet's inability to produce even reasonable reconstructions. Consequently, we infer that without substantial modifications, PointNet is unlikely to acquire the necessary capability for effective function encoding.

Specifically, we generate random functions by 
\[f(x)=\frac{1}{2}+\frac{1}{8}\sum_{k=1}^4 a_k\sin(2\pi kx)\]
where $a_k\sim Uniform(0,1)$ are the parameters controlling the generation and $f(x)\in(0,1)$. We randomly sample $\{x_i, y_i\}_{i=1}^{5000}$ where $y_i=f(x_i)+\epsilon_i$, $x_i\sim Uniform(0,1)$ and $\epsilon_i$ is white noise with variance 1e-4. Then we stack $x_i$ and $y_i$ into a $(5000, 2)$ matrix and feed it to a standard PointNet to generate a $1024$-dimensional vector. After encoding the function samples, we introduce a decoder that retrieves the function values from the encoding when receiving function inputs. We optimize the PointNet encoder and the decoder by minimizing the reconstruction loss. The procedure can be summarized as:
\[\F = \text{\texttt{PointNet}}\big(\{(x_i, y_i)\}_{i=1}^n\big),\quad \hat{y_i} = \text{\texttt{Decoder}}(\F, x_i),\quad \mathcal{L}=\sum_{i=1}^n |y_i-\hat{y}_i|^2\]
The decoder is designed as followed: $x_i$ is lifted to a $1024$-dimensional vector $\tilde{\mathbf{x}}_i$ through sequential layers of \texttt{Linear(1,64)}, \texttt{Linear(64,128)}, \texttt{Linear(128,1024)} with \texttt{BatchNorm} and \texttt{ReLU} inserted between the linear layers. The retrieved function value $\hat{y}_i$ is computed by the dot product between $\F$ and $\tilde{\mathbf{x}}_i$. Figure \ref{fig:app_ptn_recon} (\textbf{Left}) plots the training curve and Figure \ref{fig:app_ptn_recon} (\textbf{Right}) visualizes the reconstruction, which shows PointNet fails to learn a decodable representation.

\begin{figure}[H]
     \centering
     \begin{subfigure}[b]{0.45\textwidth}
         \centering
         \includegraphics[width=\textwidth]{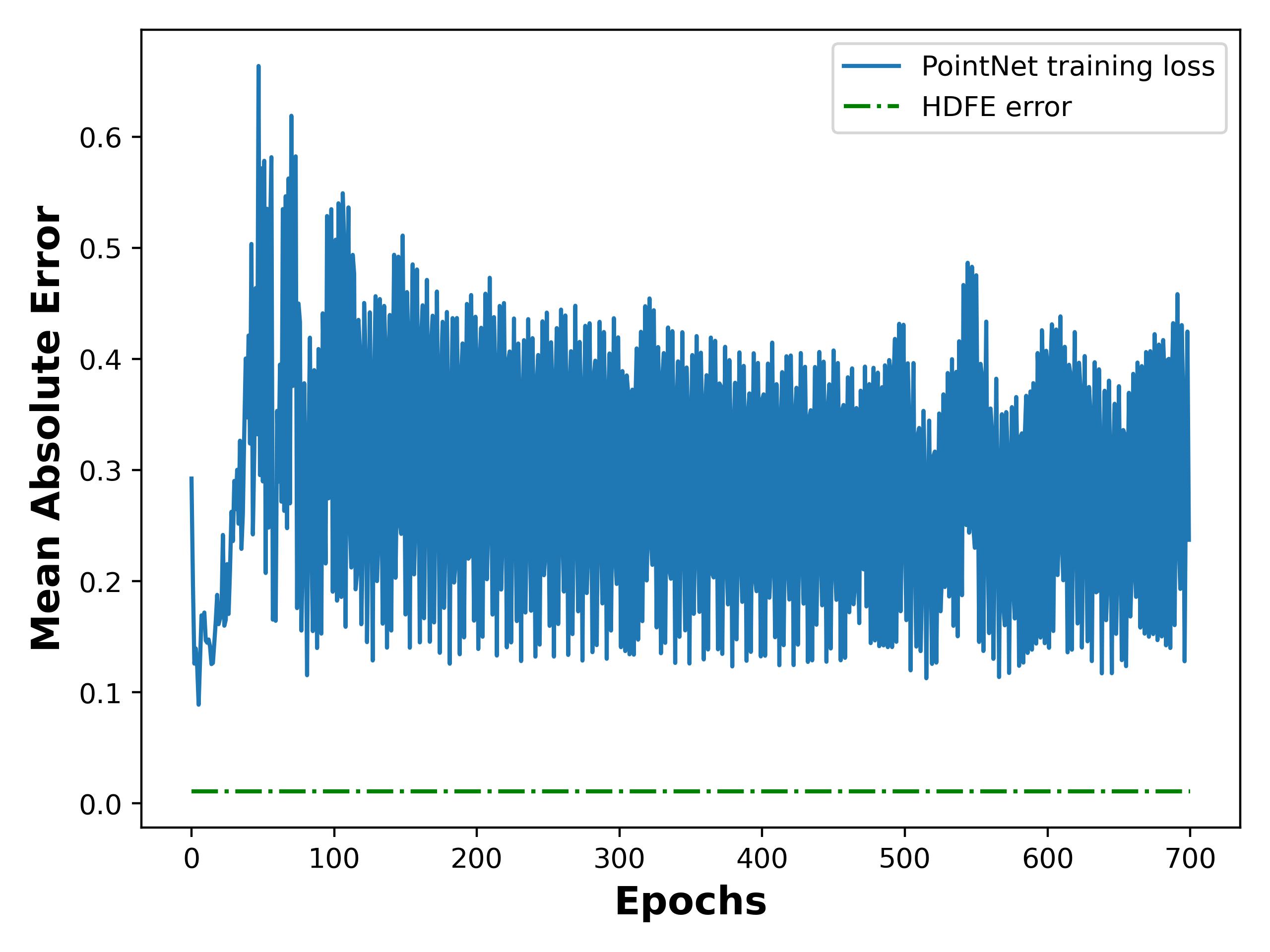}
     \end{subfigure}
     \hfill
     \begin{subfigure}[b]{0.45\textwidth}
         \centering
         \includegraphics[width=\textwidth]{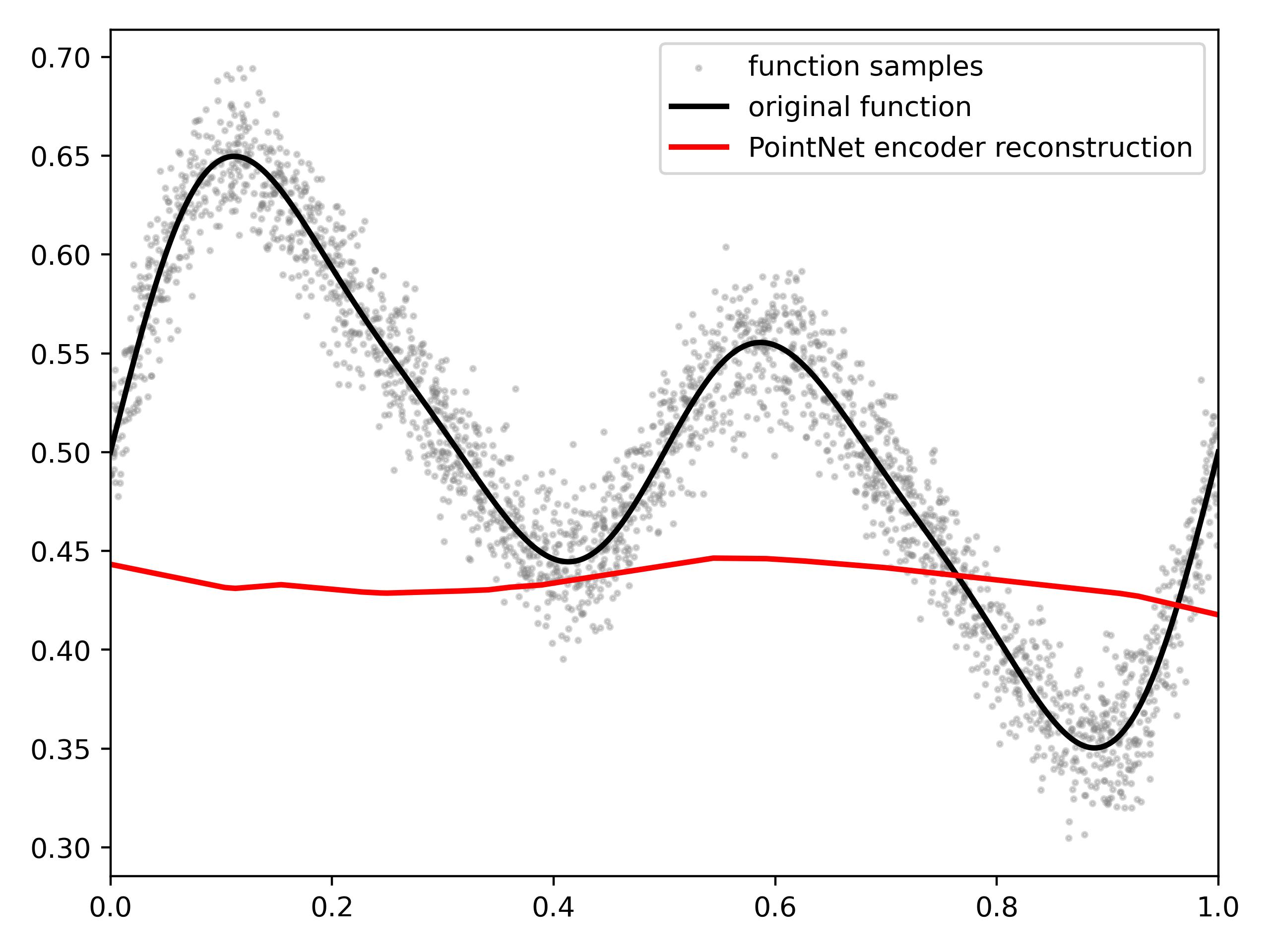}
     \end{subfigure}
    \caption{PointNet fails to learn an effective function encoder. \textbf{Left}: Training curve of the PointNet function encoder. The error is still significantly higher than HDFE though training for 700 epochs. \textbf{Right}: The PointNet function encoder does not produce a reasonable reconstruction.}
    \label{fig:app_ptn_recon}
    \vspace{-7pt}
\end{figure}

\section{Vector Function Architecture}
\label{sec:app_VFA}
In this section, we will give a brief introduction of VFA and its relation and comparison with HDFE.

\textbf{Vector Function Architecture (VFA)} A function of the form 
\begin{equation}
f(x)=\sum_{k}\alpha_k\cdot K(x,x_k)\label{eqn:VFA}    
\end{equation}
is represented by $\F=\sum_k \alpha_k\cdot E(x_k)$, where the kernel $K(\cdot,\cdot)$ and the encoder $E(\cdot)$ satisfy $K(x,y)=\langle E(x), E(y) \rangle$. Given the function encoding $\F$, the function value at a query point $x_0$ can be retrieved by $\hat{f}(x_0)=\langle \F, E(x_0)\rangle$. 

VFA satisfies the four desired properties: VFA can encode an explicit function into a fixed-length vector, where the encoding is sample invariant and decodable. Besides, the function encoding will preserve the overall similarity of the functions:  $\langle \F, \mathbf{G}\rangle=\int_x f(x)g(x)dx$. Note that when encoding multiple functions, the kernel $K(\cdot,\cdot)$ and the encoder $E(\cdot)$ must be the same. Otherwise, the function encoding generated by VFA will be meaningless.
\begin{wrapfigure}{R}{0.4\textwidth}
\begin{minipage}{0.4\textwidth}
    \centering
    \includegraphics[width=\textwidth]{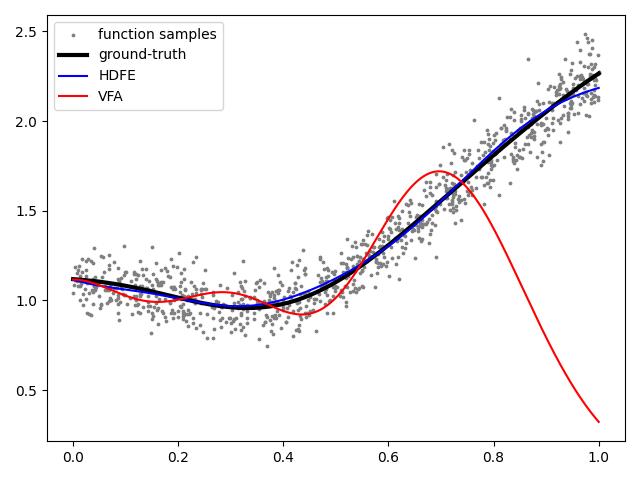}
    \caption{VFA fails to reconstruct the function but HDFE succeeds.}
    \label{fig:VFA_fail}
\end{minipage}
% \vspace{-7pt}
\end{wrapfigure}

However, it is a strong assumption that a function can be approximated by \eqref{eqn:VFA}. In other words, under a fixed selection of $K(\cdot,\cdot)$ and $E(\cdot) $, \eqref{eqn:VFA} can only approximate a small set of functions. There are a large set of functions that cannot be approximated by \eqref{eqn:VFA}. In Fig. \ref{fig:VFA_fail}, we show a failure case of VFA. We generate a function $f(x)=0.1+\sum_{k=1}^{10}\alpha_k\cdot K(x,x_k)$, where $x_k\sim N(0,1)$, $\alpha_k\sim Uniform(0,1)$. $K(x,y)=\exp[-20(x-y)^2]$ is an RBF kernel. We generate 1000 function samples and add white noise to the output values to form the training set. The experiment shows that VFA fails to reconstruct the function but HDFE succeeds.

In this paper, HDFE is applicable for all $c$-Lipchitz function, without compromising the four desired properties. Thanks to the improvement, we can apply function encoding to real-world applications. The first application, PDE solving, cannot be solved by VFA because the input and output functions cannot be approximated by \eqref{eqn:VFA}. The second application, local geometry prediction, has to deal with implicit function encoding, which cannot be solved by VFA either because VFA only encodes explicit functions.

\section{Suitable Input Types for HDFE}
\label{sec:app6_input_types}
We mentioned HDFE can encode Lipschitz functions. In this section, we will discuss several data types that exhibits Lipschitz continuity, making them well-suited as inputs for HDFE. This enumeration is not exhaustive of all potential suitable inputs for HDFE. Our objective is to furnish a conceptual understanding of the characteristics that define a suitable input, thereby guiding future considerations in this domain.

\textbf{Point Cloud Local Geometry} In many 3D vision problems, point cloud local features are important, such as point cloud registration \citep{huang2021comprehensive}, scene matching \citep{han20233d}. The local geometry around a point in a point cloud is usually a continuous surface, which inherently exhibits Lipschitz continuity. This characteristic makes HDFE suitable for encoding this type of data.

\textbf{Meteorological Measurements} In the field of meteorology, machine learning plays a pivotal role in various applications, including pollutant estimation \citep{lu2020estimations} and weather forecasting \citep{chen2022estimation}. A standard practice in these applications involves inputting meteorological measurements from neighboring areas (for instance, windows of 100 km$^2$) into machine learning models. HDFE is a suitable tool for encoding these neighboring meteorological measurements, primarily 
due to the inherent physical constraints that ensure these measurements do not rapidly change over time and space.

\textbf{Time Series Data}: Some time series data, especially in fields like event logging \citep{chen2021experience} or network traffic \citep{abbasi2021deep}, are sparse in nature. Such sparse events can be treated as implicit functions and therefore HDFE is well-suited for it.

\section{Derivation of Decoding}
\label{sec:app_decoding_proof}
In this section, we complete the details why $\F\oslash E_X(x_0)$ can be decomposed into the summation of $E_Y(f(x_0))$ and noises. This is an immediate outcome from the similarity preserving property. When $d_X(x_0, x_i)$ is large, $\langle E_X(x_0)\otimes E_Y(f(x_0)), E_X(x_i)\otimes E_Y(f(x_i)) \rangle\approx 0$. Since the unbinding operation is similarity preserving, we have $\langle E_Y(f(x_0)), E_X(x_i)\otimes E_Y(f(x_i)) \oslash E_X(x_0)\rangle\approx 0$. Therefore, $\F\oslash E_X(x_0)$ can be decomposed into $E_Y(f(x_0))$ and the sum of vectors that are orthogonal to it. Consequently, when we search for the $y$ to maximize the $\langle \F\oslash E_X(x_0), E_Y(y) \rangle$, those orthogonal vectors will not bias the optimization.

\section{Gradient Descent for Decoding Function Encoding}
\label{sec:app_decoding}
In \eqref{eqn:decoding}, HDFE decodes the function encoding by a similarity maximization. Given the function encoding $\F\in\mathbb{C}^N$, and a query point $x_0\in X$, the function value $\hat{y}_0$ is reconstructed by:
\[\hat{y_0}=\mathrm{argmax}_{y\in Y} \langle \F\oslash E_X(x_0), E_Y(y)\rangle\]
When $E_X$ and $E_Y$ are chosen as the fractional power encoding (\eqref{eqn:FPE}), the optimization can be solved by gradient descent. In this section, we detail the gradient descent formulation. We assume the output space $Y=\mathbb{R}$.

By setting $E_Y$ as the fractional power encoding, the optimization can be rewritten as:
\[\hat{y}_0=\mathrm{argmax}_{y\in(0,1)}\langle\F\oslash E_X(x_0),\exp(i\Psi y)\rangle\text{\footnotemark}\]
where $\Psi\in\mathbb{R}^N$ is a random fixed vector where all elements are drawn from the normal distribution. Since $\F\oslash E_X(x_0)$ is a constant vector, the optimization can be further simplified as:
\begin{equation}
\hat{y}_0=\mathrm{argmax}_{y\in(0,1)}\langle\z,\exp(i\Psi y) \rangle\label{eqn:app_decoding}    
\end{equation}

\footnotetext{In \eqref{eqn:FPE}, $E_Y(y)=\exp(i\beta\Psi y)$. Since $\beta$ and $n$ are two constant numbers, we rewrite $\beta\Psi$ as $\Psi$ for notation purpose.}

where $\z=\F\oslash E_X(x_0)\in\mathbb{C}^N$. We write $\z$ into its polar form:
\[\z=\left[a_1 e^{i\theta_1}, a_2 e^{i\theta_2}, \cdots, a_N e^{i\theta_N}\right]\]
where $a_k\in[0,+\infty)$ and $\theta_k\in[0,2\pi)$. We first simplify \eqref{eqn:app_decoding} and then compute its gradient with respect to $y$.
\begin{align*}
    \langle\z,\exp(i\Psi y) \rangle &= \frac{1}{N^2}||\bar{\z}\cdot \exp(i\Psi y)||^2 \\
    &=\frac{1}{N^2}\big|\big|\sum_{k=1}^N a_k e^{i\theta_k} e^{-i\Psi_k y} \big|\big|^2 \\
    &=\frac{1}{N^2}\sum_{p=1}^N\sum_{q=1}^N a_p a_q  e^{i\big[(\theta_p-\theta_q)-(\Psi_p-\Psi_q)y\big]} \\
    &=\frac{1}{N^2}\sum_{p=1}^N\sum_{q=1}^N a_pa_q\cos\big[(\theta_p-\theta_q)-(\Psi_p-\Psi_q)y\big]
\end{align*}
The gradient can be computed easily by taking the derivative:
\begin{equation}
    \frac{d}{dy}\langle\z,\exp(i\Psi y) \rangle = \frac{1}{N^2}\sum_{p=1}^N\sum_{q=1}^N a_pa_q(\Psi_p-\Psi_q)\sin\big[(\theta_p-\theta_q)-(\Psi_p-\Psi_q)y\big]\label{eqn:app_decoding_final}
\end{equation}

In practice, $N$ can be a large number like 8000. Computing the gradient by \eqref{eqn:app_decoding_final} can be expensive. Fortunately, since $\Psi$ is a random fixed vector, where all elements are independent of each other, we can unbiasedly estimate the gradient by sampling a small number of entries in the vector. The decoding can be summarized by the pseudo-code below.

\begin{algorithm}[H]
\caption{Gradient Descent for Decoding Function Encoding}\label{alg:app_decoding}
\begin{algorithmic}
\State Input: $\F$, $x_0$, $E_X$, $\Psi$ \Comment{$\F$ is the function encoding. $x_0$ is the query point. }
\State \Comment{$E_X$ is the input mapping. $\Psi$ is the parameter in $E_Y$.}
\State $\z=\F\oslash E_X(x_0)$
\State $\z=\left[a_1 e^{i\theta_1}, a_2 e^{i\theta_2}, \cdots, a_N e^{i\theta_N}\right]$ \Comment{Convert $\z$ into its polar form.}
\While{$\langle\z,\exp(i\Psi y) \rangle$ still increases}
\State Randomly select a subset of $[0,N)\cap\mathbb{Z}$. Denote as $S$. \Comment{We choose $|S|=500$.}
\State $g=|S|^{-2}\cdot\sum_{p,q\in S} a_pa_q(\Psi_p-\Psi_q)\sin\big[(\theta_p-\theta_q)-(\Psi_p-\Psi_q)y\big]$
\State $y \leftarrow y-\alpha \cdot g$ \Comment{$\alpha$ is the learning rate.}
\EndWhile
\end{algorithmic}
\end{algorithm}

\section{Relation between FPE and RBF Kernel}
\label{sec:app_FPE_RBF}
In the main paper, we mention that we adopt fractional power encoding (FPE) as the mapping for the input and output space. In this section, we explain the rational behind our choice by revealing its close relationship between the radial basis function (RBF) kernel. For explanation purpose, we discuss the single-variable case. It is straight-forward to generalize to multi-variable scenarios.

RBF kernel is a similarity measure defined as $K(x,y)=\exp\big(-\gamma(x-y)^2\big)$. The feature space of the kernel has an infinite number of dimensions:
\[\exp\big(-\gamma(x-y)^2\big)=\sum_{k=0}^\infty \frac{(-1)^k}{(2k)!}\gamma^{2k}(x-y)^{2k}=\langle \phi(x), \phi(y)\rangle\]
where $\phi(x)=e^{-\gamma x^2}\left[1,\sqrt{\frac{\gamma}{1!}}x, \sqrt{\frac{\gamma^2}{2!}}x^2, \sqrt{\frac{\gamma^3}{3!}}x^3, \cdots \right]$.

We will then show a theorem that tells that FPE maps real numbers into finite complex vectors, where the similarity between the vectors can approximate a heavy-tailed RBF kernel. However, the feature space of the heavy-tailed RBF kernel still has an infinite number of dimensions and therefore inherits all the blessings of the RBF kernel. Notably, FPE provides a finite encoding, while approximating an infinite dimensional feature space.

\begin{theorem}
Let $x, y\in\mathbb{R}$, $E(x)=e^{i\gamma\Theta x}$, where $\Theta\in\mathbb{R}^N$ and $\forall \theta \in\Theta, \theta\sim \mathcal{N}(0,\frac{1}{2})$, then as $N\to\infty$,
\[\langle E(x), E(y)\rangle=\langle e^{i\gamma\Theta x}, e^{i\gamma\Theta y}\rangle\rightarrow\sum_{k=0}^\infty \frac{(-1)^k}{(2k)!!}\gamma^{2k}(x-y)^{2k}=\langle \phi(x),\phi(y)\rangle\]
where $\phi(x)=e^{-\gamma x^2}\left[1,\sqrt{\frac{\gamma}{1!!}}x, \sqrt{\frac{\gamma^2}{2!!}}x^2, \sqrt{\frac{\gamma^3}{3!!}}x^3, \cdots \right]$.
\label{thm:FPE}
\end{theorem}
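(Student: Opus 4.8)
The plan is to read the cosine similarity as an empirical average of i.i.d.\ bounded random variables, pass to the limit via the law of large numbers, identify the limiting constant as a Gaussian characteristic function, and then factor that exponential into the claimed feature map $\phi$.

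First I would unwind the definitions. Every coordinate of $E(x)=e^{i\gamma\Theta x}$ has unit modulus, so $\|E(x)\|=\|E(y)\|=\sqrt{N}$, and the Hermitian inner product is $\sum_{k=1}^{N} e^{i\gamma\theta_k x}\,\overline{e^{i\gamma\theta_k y}}=\sum_{k=1}^{N} e^{i\gamma\theta_k(x-y)}$. Hence the similarity is
\[
\langle E(x),E(y)\rangle=\frac1N\sum_{k=1}^{N} e^{i\gamma\theta_k(x-y)}
\]
(if the cosine similarity is the real-part version one takes $\mathrm{Re}$ of the right-hand side; this is immaterial since the limit below is real). This exhibits $\langle E(x),E(y)\rangle$ as the sample mean of the i.i.d.\ variables $Z_k:=e^{i\gamma\theta_k(x-y)}$, each bounded by $1$ and hence integrable.

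Next I would invoke the strong law of large numbers: $\tfrac1N\sum_{k=1}^{N}Z_k\to\mathbb{E}[Z_1]$ almost surely (and in $L^1$, hence in probability), which is the sense ``$\to$'' should carry here. The expectation is exactly the characteristic function of $\theta_1\sim\mathcal{N}(0,\sigma^2)$ evaluated at $t=\gamma(x-y)$, namely $\mathbb{E}[Z_1]=\exp\!\big(-\tfrac12\sigma^2\gamma^2(x-y)^2\big)=:\exp\!\big(-c\,\gamma^2(x-y)^2\big)$, with $c$ fixed by the variance of $\Theta$. This already proves convergence to a closed-form limit; the remaining work is bookkeeping.

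Finally I would rewrite this exponential to match the statement. Expanding, $\exp(-c\gamma^2(x-y)^2)=\sum_{k\ge0}\tfrac{(-c)^k}{k!}\gamma^{2k}(x-y)^{2k}$, and since $(2k)!!=2^k k!$ this has the form $\sum_{k\ge0}\tfrac{(-1)^k}{(2k)!!}\gamma^{2k}(x-y)^{2k}$ under the intended normalization. To recognize it as $\langle\phi(x),\phi(y)\rangle$, factor $(x-y)^2=x^2-2xy+y^2$, giving $\exp(-c\gamma^2(x-y)^2)=e^{-c\gamma^2x^2}\,e^{-c\gamma^2y^2}\sum_{k\ge0}\tfrac{(2c\gamma^2)^k}{k!}(xy)^k$, and read off the feature map whose $k$-th coordinate is $e^{-c\gamma^2x^2}\sqrt{(2c\gamma^2)^k/k!}\,x^k$; with $(2c\gamma^2)^k/k!=\gamma^{2k}/(2k)!!$ this is the $\phi$ displayed in the theorem. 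All series involved converge absolutely by the exponential bounds, so the term-by-term manipulations are justified. The argument is short, and the only delicate point is the constant bookkeeping: one must keep the factor of two consistent across the variance of $\Theta$, the $(2k)!!$ coefficients, and the $\gamma$-powers and prefactor in $\phi$, while being explicit that the asserted convergence is almost sure (equivalently, in probability). I do not anticipate a genuine obstacle beyond this.
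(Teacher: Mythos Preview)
Your route is cleaner than the paper's, but it does not land on the stated series because the paper's similarity $\langle\cdot,\cdot\rangle$ is \emph{not} the normalized Hermitian inner product. In this paper (see also the decoding computation in the appendix), the similarity is defined as
\[
\langle E(x),E(y)\rangle \;=\; \frac{1}{N^2}\Bigl|\sum_{k=1}^{N} e^{i\gamma\theta_k(x-y)}\Bigr|^{2},
\]
i.e.\ the squared modulus of your average. With your (standard) definition and $\theta_k\sim\mathcal N(0,\tfrac12)$, the SLLN gives $\exp\bigl(-\tfrac14\gamma^{2}(x-y)^{2}\bigr)$, whose expansion has coefficients $(-1)^k/(4^k k!)$, not $(-1)^k/(2k)!!=(-1)^k/(2^k k!)$. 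So the ``constant bookkeeping'' you flagged is not merely cosmetic: without the square, the $(2k)!!$ series and the displayed $\phi$ do not come out. The one-line fix is to square your limit (equivalently, adopt the paper's definition), after which $\bigl|\exp(-\tfrac14\gamma^2(x-y)^2)\bigr|^2=\exp(-\tfrac12\gamma^2(x-y)^2)$ expands exactly to $\sum_{k\ge0}\frac{(-1)^k}{(2k)!!}\gamma^{2k}(x-y)^{2k}$ and your feature-map factorization goes through with $c=\tfrac12$.

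On the comparison of approaches: the paper expands the squared sum into a double sum $\frac{1}{N}+\frac{1}{N^{2}}\sum_{p\neq q}\cos\!\bigl[\gamma(\theta_p-\theta_q)(x-y)\bigr]$, Taylor-expands the cosine, and then uses $\theta_p-\theta_q\sim\mathcal N(0,1)$ together with $\mathbb{E}[(\theta_p-\theta_q)^{2k}]=(2k-1)!!$ to identify each coefficient in the limit. Your argument is more direct and arguably more rigorous: a single application of the SLLN to i.i.d.\ bounded summands identifies the limit as a Gaussian characteristic function in closed form, so you never need to interchange the infinite Taylor sum with the $N\to\infty$ limit (a swap the paper performs without justification). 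The paper's route, on the other hand, makes the appearance of $(2k-1)!!$ and hence $(2k)!!$ transparent via Gaussian moments. Either way, once you use the squared-modulus similarity the two computations agree.
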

\begin{proof}
    \begin{align*}
        \langle e^{i\gamma\Theta x}, e^{i\gamma\Theta y}\rangle &= \frac{1}{N^2}||\exp(i\gamma\Theta x)\cdot \exp(-i\gamma\Theta y)||^2 \\
        &=\frac{1}{N^2}\big|\big| \sum_{k=1}^N e^{i\gamma\theta_k (x-y)} \big|\big|^2 \\
        &=\frac{1}{N^2}\Big[\sum_{p=1}^N 1 + \sum_{p\neq q}e^{i\gamma(\theta_p-\theta_q)(x-y)}\Big] \\
        &=\frac{1}{N}+\frac{1}{N^2}\sum_{p\neq q} \cos\big[\gamma(\theta_p-\theta_q)(x-y)\big] \\
        &=\frac{1}{N}+\frac{1}{N^2}\sum_{p\neq q}\sum_{k=0}^\infty(-1)^k \frac{\big[\gamma(\theta_p-\theta_q)(x-y)\big]^{2k}}{(2k)!} \\
        &=\frac{1}{N}+\sum_{k=0}^{\infty} (-1)^k \big[\sum_{p\neq q}\frac{(\theta_p-\theta_q)^{2k}}{N^2}\big] \frac{\gamma^{2k}(x-y)^{2k}}{(2k)!} \\
        &\rightarrow \sum_{k=0}^{\infty} (-1)^k (2k-1)!! \frac{\gamma^{2k}(x-y)^{2k}}{(2k)!} \\
        &=\sum_{k=0}^\infty \frac{(-1)^k}{(2k)!!}\gamma^{2k}(x-y)^{2k}
    \end{align*}
    Since $\theta_p, \theta_q\sim\mathcal{N}(0,\frac{1}{2})$, $\theta_p-\theta_q\sim\mathcal{N}(0,1)$, and therefore, $\mathbb{E}[(\theta_p-\theta_q)^{2n}]=(2n-1)!!$. So $\sum_{p\neq q}\frac{(\theta_p-\theta_q)^{2n}}{N^2}$ converges to $(2n-1)!!$.
\end{proof}

\section{Proof of the HDFE's Properties}
\label{sec:app_properties}
\subsection{Asymptotic Sample Invariance}
\label{sec:app_proof_sample_invariance}
In the main paper, we claim that the iterative refinement (Algorithm \ref{alg:1}) will converge to \textit{the center of the smallest ball containing all the sample encodings} and therefore, HDFE leads to an asymptotic sample invariant representation. In this section, we detail the proof of the argument. To facilitate the understanding, Figure \ref{fig:app_graph_proof} sketches the proof from a high-level viewpoint. Recall the definition of asymptotic sample invariance (definition \ref{def:1}):

\begin{definition*}[Asymptotic Sample Invariance]
Let $f:X\rightarrow Y$ be the function to be encoded, $p:X\rightarrow (0,1)$ be a probability density function (pdf) on $X$, $\{x_i\}_{i=1}^n \sim p(X)$ be $n$ independent samples of $X$. Let $\F_n$ be the representation computed from the samples $\{x_i, f(x_i)\}^n_{i=1}$, asymptotic sample invariance implies $\F_n$ converges to a limit $\F_\infty$ independent of the pdf $p$.
\end{definition*}

\begin{proof}
We begin by showing the iterative refinement converges to
\begin{equation}
    \F_n=\mathrm{argmax}_{||z||=1}\min _{i=1}^n \langle z, E(x_i,f(x_i))\rangle
    \label{eqn:app_sample_invariance_final}
\end{equation}
where $E(x, f(x))$ is defined at \eqref{eqn:FPE} in the original paper. It maps a function sample to a high-dimensional space $\mathbb{C}^N$.

To show the convergence of the iterative refinement, it follows from the gradient descent: since $\nabla [-\min_i \langle z, E(x_i, f(x_i)\rangle]=-\mathrm{argmin}_i \langle z, E(x_i, f(x_i)\rangle$, the gradient descent is formulated as $z\leftarrow z+\alpha \cdot \mathrm{argmin_i}\langle z, E(x_i, f(x_i))\rangle$, which aligns with the iterative refinement in the paper.

Then we will prove \eqref{eqn:app_sample_invariance_final} produces a sample invariant encoding by proving $\F_n$ converges to 
\begin{equation}
    \F_\infty=\mathrm{argmax}_{||z||=1}\min _{x\in X} \langle z, E(x,f(x))\rangle
\end{equation}

Throughout the proof, we use the following definitions:

\begin{table}[h]
\begin{center}% used the environment to augment the vertical space
% between the caption and the table
\begin{tabular}{r c p{10cm} }
\toprule
$S\subset\mathbb{C}^N$ & $\triangleq$ & $\bigcup_{x\in X}E(x, f(x))$\\
$S_n\subset\mathbb{C}^N$ & $\triangleq$ & $\bigcup_{i=1}^n E(x_i, f(x_i))$ \\
$||\cdot||$ & $\triangleq$ & L2-norm of a complex vector. \\
$d_H$ & $\triangleq$ & $\max_{q\in Q}\min_{p\in P}||p-q||$. Hausdorff distance between two compact sets $P\subset Q\subset \mathbb{C}^N$.\\
$Ball(P)$ & $\triangleq$ & the smallest solid ball that contains the compact set $P$.\\
$center(Ball(\cdot))$ & $\triangleq$ & center of the ball.\\
\bottomrule
\end{tabular}
\end{center}
\label{tab:app_sample_invariance_table}
\end{table}

Recall from \eqref{eqn:FPE}, $E(x,y)=\mathcal{F}^{-1}(e^{i(\Phi x+\Psi y)})$, so $||E(x,y)||=1$ for all $x$ and $y$. Since $||z_1-z_2||^2=||z_1||^2+||z_2||^2-2\langle z_1, z_2\rangle$, we have $\langle z, E(x,y)\rangle = 1-\frac{1}{2}||z-E(x,y)||^2$ if $||z||^2=1$. Therefore, \eqref{eqn:app_sample_invariance_final} is equivalent to
\begin{equation}
\F_n=\mathrm{argmin}_{||z||=1} \max_{i=1}^n ||z-E(x_i, f(x_i))||\label{eqn:app_sample_invariance_euclead}
\end{equation}

Note that \eqref{eqn:app_sample_invariance_euclead} implies \textbf{$\F_n$ is the center of the smallest ball containing $S_n$}:
\begin{equation}
  \F_n=center(Ball(S_n))\label{eqn:app_sample_invariance_ball}  
\end{equation}
\[\]
because if we were to construct balls containing $S_n$ with a center $\F'\neq \F_n$, the radius of the ball must be larger than the radius of $Ball(S_n)$.

\textbf{When $n\to\infty$, the Hausdorff distance between $Ball(S_n)$ and $Ball(S)$ goes to 0 with probability one.} First, it is easy to see that $d_H(Ball(S_n), Ball(S))$ is a decreasing sequence and is positive, so the limit exists. Assume the limit is strictly positive, then there exists a point $p'\in S$ such that $\min_{q\in S_n}||p'-q|| >c$ for some constant $c>0$ as $n\to\infty$. This means no sample is drawn from the ball $B_c(p')$. This is contradictory to the definition of $p:X\rightarrow (0,1)$: $p$ is positive over the input space $X$.

Finally, we conclude by $||\F_n-\F_\infty||\leq d_H(Ball(S_n), Ball(S))$.  Since $Ball(S_n)\subset Ball(S)$, from elementary geometry, if $A\subset B$ are two balls, then $||center(B)-center(A)||\leq radius(B)-radius(A)\leq d_H(B,A)$. Therefore, we have $||center(Ball(S_n))-center(Ball(S))||\leq d_H(Ball(S_n), Ball(S))$. Therefore, $||\F_n-\F_\infty||\leq d_H(Ball(S_n), Ball(S))$, which decays to 0 as $n\to\infty$.
\end{proof}
\begin{figure}[t]
    \centering
    \includegraphics[width=0.5\textwidth]{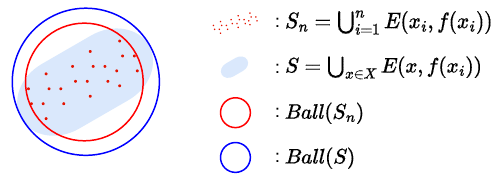}
    \caption{Proof of asymptotic sample invariance (overview). $Ball(S)$ and $Ball(S_n)$ are the smallest ball containing $S$ and $S_n$. As $n\to\infty$, the Hausdorff distance between the two balls goes to zero with probability one. From elementary geometry, $||center(Ball(S_n))-center(Ball(S))||\leq d_H(Ball(S_n), Ball(S))$. So the distance between the centers of the two balls goes to 0.}
    \label{fig:app_graph_proof}
\end{figure}

\subsection{Isometry}
\label{sec:app_isometry}
In this section, we complete the proof that HDFE is an isometry. 

\begin{theorem*}
    Let $f, g:X\rightarrow Y$ be both $c$-Lipschitz continuous, then their L2-distance is preserved in the encoding. In other words, HDFE is an isometry:
    \[||f-g||_{L_2}=\int_{x\in X} \big|f(x) - g(x)\big|^2dx\approx b -a\langle \F, \mathbf{G}\rangle\]    
\end{theorem*}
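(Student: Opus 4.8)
The plan is to compute $\langle \F, \mathbf{G}\rangle$ directly from the superposition formula $\F = \mathrm{normalize}\big(\sum_i w_i E_X(x_i)\otimes E_Y(f(x_i))\big)$ and show that, in the asymptotic regime where the iterative-refinement weights converge (Theorem on sample invariance) and the embedding dimension $N\to\infty$, the inner product becomes an integral over $X$ of a kernel depending only on $f(x)-g(x)$. First I would pass to the asymptotic encodings $\F_\infty, \mathbf G_\infty$, which are averages (with the limiting weights) of $E_X(x)\otimes E_Y(f(x))$ over $x\in X$; by the similarity-preserving property of $\otimes$ and the definition of cosine similarity, $\langle \F_\infty,\mathbf G_\infty\rangle$ is (up to the normalization constants) a double integral $\int_X\int_X \langle E_X(x), E_X(x')\rangle\,\langle E_Y(f(x)), E_Y(g(x'))\rangle\,d\mu(x)\,d\mu(x')$. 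Using Theorem~\ref{thm:FPE} (FPE approximates a heavy-tailed RBF kernel), $\langle E_X(x),E_X(x')\rangle$ concentrates near $x=x'$ as $\alpha$ grows: it behaves like a bump of width $\sim\epsilon_0$ around the diagonal. Hence the double integral collapses to a single integral $\int_X \langle E_Y(f(x)), E_Y(g(x))\rangle\,d\mu(x)$ (times a constant absorbing the mass of the bump), where I use the Lipschitz continuity of $f$ and $g$ to control the error incurred by identifying $x'$ with $x$ on the support of the bump — the function values differ by at most $c\epsilon_0$ there.

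Next I would expand $\langle E_Y(f(x)), E_Y(g(x))\rangle$ using the same FPE/RBF correspondence: it equals (asymptotically in $N$) a function $\kappa\big(|f(x)-g(x)|\big)$ that, to leading order, is $1 - \text{const}\cdot|f(x)-g(x)|^2$ — this is exactly the $k=0,1$ terms of the series in Theorem~\ref{thm:FPE}, valid when $\beta|f(x)-g(x)|$ is moderate. Integrating over $X$ gives $\langle \F_\infty,\mathbf G_\infty\rangle \approx C_1 - C_2\int_X |f(x)-g(x)|^2\,dx$ for constants $C_1,C_2$ depending on $\alpha,\beta,N$ and $\mathrm{vol}(X)$ but not on $f,g$. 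Solving for the integral yields $\|f-g\|_{L_2} \approx b - a\langle\F,\mathbf G\rangle$ with $a = 1/C_2$ and $b = C_1/C_2$, which is the claimed isometry (an affine isometry in the cosine-similarity geometry). I would also note that the normalization of $\F$ is harmless here because $\|\F_\infty\|$ itself is (asymptotically) a constant independent of $f$, by the same diagonal-concentration computation applied to $\langle\F_\infty,\F_\infty\rangle$.

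The main obstacle, and the step deserving the most care, is making the ``double integral collapses to a single integral'' argument quantitative: one must show that the off-diagonal contribution $\int\int_{d_X(x,x')>\epsilon_0}$ is genuinely negligible (it is a sum of near-orthogonal terms, so it scales like $O(N^{-1/2})$ in fluctuation but one needs its expectation to vanish), and simultaneously that on the diagonal band the substitution $g(x')\to g(x)$ costs only $O(c\epsilon_0)$, so that the receptive field $\epsilon_0$ must be taken small relative to the scale on which $f-g$ varies. A secondary subtlety is that the quadratic approximation $\kappa(t)\approx 1 - C t^2$ is only the leading order of an infinite series, so the stated identity is necessarily an approximation ($\approx$, as written) valid when $\beta\cdot\mathrm{diam}(Y)$ is not too large; I would state this hypothesis explicitly rather than claim exact equality. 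The remaining steps — interchanging limits, identifying the constants, handling the weights via the sample-invariance theorem — are routine once this core concentration estimate is in place.
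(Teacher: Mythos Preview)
Your proposal is correct and follows essentially the same route as the paper: express $\langle\F,\mathbf G\rangle$ as a double integral over $X\times X$, use the concentration of $\langle E_X(x),E_X(x')\rangle$ near the diagonal to collapse it to a single integral $\int_X\langle E_Y(f(x)),E_Y(g(x))\rangle\,dx$, then invoke the FPE/RBF series (Theorem~\ref{thm:FPE}) and truncate at the quadratic term. If anything, your write-up is more careful than the paper's --- you make the role of the Lipschitz constant in the diagonal collapse explicit, you discuss the normalization, and you state the regime in which the quadratic truncation is valid.

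One small technical remark: the factorization $\langle E_X(x)\otimes E_Y(y),\,E_X(x')\otimes E_Y(y')\rangle=\langle E_X(x),E_X(x')\rangle\cdot\langle E_Y(y),E_Y(y')\rangle$ does \emph{not} follow from the similarity-preserving property alone (that property only handles the case $x=x'$). It does hold asymptotically for the concrete FPE realization because $\Phi$ and $\Psi$ are independent, so the characteristic-function argument in Theorem~\ref{thm:FPE} splits into a product; you should cite that rather than similarity preservation. The paper sidesteps this by first restricting to the diagonal band and only then invoking the lemma $\langle a\otimes b,a\otimes c\rangle=\langle b,c\rangle$ at $x=x'$, which amounts to the same thing.
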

\begin{lemma}
    $\langle x\otimes y, x\otimes z\rangle=\langle y,z \rangle$.
    \label{thm:lemma}
\end{lemma}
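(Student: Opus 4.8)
My plan is to reduce this identity to one elementary observation: binding by an encoder vector is a \emph{diagonal unitary} map, and the cosine similarity of complex vectors is built solely from the Hermitian pairing and the norms, each of which a unitary preserves. In the (frequency-domain) representation the binding $\otimes$ is coordinatewise complex multiplication, and the left factor $x$ — which in every application of this lemma is of the form $E_X(\cdot)$, see \eqref{eqn:FPE} — has all coordinates of modulus one, say $x_k = e^{i\phi_k}$. Hence $a \mapsto x\otimes a$ equals $a \mapsto U_x a$ with $U_x = \mathrm{diag}(x_1,\dots,x_N)$, and $U_x^{\ast}U_x = \mathrm{diag}(|x_1|^2,\dots,|x_N|^2) = I$, so $U_x$ is unitary. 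The all-FPE case ($y,z$ also unit-modulus) is then an immediate special case.

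The steps, in order, would be: (i) record the standing convention that $\langle a,b\rangle$ is a function only of the Hermitian pairing $\langle a,b\rangle_{\mathrm H} = \sum_k \bar a_k b_k$ and the norms $\|a\|,\|b\|$ — whatever concrete form one adopts (the real part of $\langle a,b\rangle_{\mathrm H}/(\|a\|\,\|b\|)$, its modulus, or the $\tfrac{1}{N^2}|\langle a,b\rangle_{\mathrm H}|^2$ form appearing in the proof of Theorem~\ref{thm:FPE} and in Appendix~\ref{sec:app_decoding}); (ii) since $U_x$ is unitary, $\langle x\otimes y, x\otimes z\rangle_{\mathrm H} = \langle U_x y, U_x z\rangle_{\mathrm H} = \langle y,z\rangle_{\mathrm H}$ and $\|x\otimes y\| = \|y\|$, $\|x\otimes z\| = \|z\|$ — concretely the common phase $e^{i\phi_k}$ cancels in each coordinate of $\overline{(x\otimes y)_k}\,(x\otimes z)_k$, and $|x_k| = 1$ leaves all moduli untouched; (iii) substitute (ii) into the formula from (i), so every occurrence of $x$ disappears and the whole expression collapses to $\langle y,z\rangle$.

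I expect no genuine obstacle here — the entire content is the cancellation of a shared unit-modulus phase — but two remarks are worth making. First, the identity is special to the HDFE construction: under coordinatewise multiplication it can fail when $|x_k|$ varies with $k$ (e.g. $x=(1,2)$, $y=(1,1)$, $z=(1,-1)$), so the proof must use that $x$ is an $E_X$-image; the same computation also dispatches the unbinding $\oslash$, whose action is multiplication by $\overline{U_x} = U_x^{\ast}$, again unitary. Second, this lemma is exactly the algebraic engine needed for Theorem~\ref{thm:isometry}: expanding $\langle\F,\mathbf{G}\rangle$ over sample pairs, the diagonal terms become $\langle E_Y(f(x_i)), E_Y(g(x_i))\rangle$ by Lemma~\ref{thm:lemma}, the off-diagonal terms are negligible since $E_X(x_i)$ and $E_X(x_j)$ are near-orthogonal for $x_i \neq x_j$, and $\langle E_Y(y_1), E_Y(y_2)\rangle$ is — via Theorem~\ref{thm:FPE} — approximately a fixed decreasing function of $|y_1 - y_2|$ whose linearization in $\langle\cdot\rangle$ yields $|f(x_i)-g(x_i)|^2$; the sum over $i$ is a Riemann sum converging to $\int_X |f-g|^2$, which gives $\|f-g\|_{L_2} \approx b - a\langle\F,\mathbf{G}\rangle$.
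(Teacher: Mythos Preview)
Your proposal is correct and takes essentially the same approach as the paper: both exploit that binding by a unit-modulus (FPE) vector acts as a diagonal unitary, so the shared phase $e^{i\phi_k}$ cancels coordinatewise in the Hermitian pairing. The paper's proof is the bare one-line computation $e^{i(\mathbf{x}+\mathbf{y})}\cdot e^{-i(\mathbf{x}+\mathbf{z})} = e^{i(\mathbf{y}-\mathbf{z})}$ with all three of $x,y,z$ assumed to be FPE vectors; your unitary framing is marginally more general (only $x$ need have unit-modulus coordinates) but the content is identical.
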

\begin{proof}
    Let $x=e^{i\mathbf{x}}$, $y=e^{i\mathbf{y}}$, $z=e^{i\mathbf{z}}$,
    \begin{align*}
        \langle x\otimes y, x\otimes z\rangle &=\langle e^{i(\mathbf{x}+\mathbf{y}))}, e^{i(\mathbf{x}+\mathbf{z})}\rangle \\
        &=e^{i(\x + \y)} \cdot e^{-i(\x + \z)}\\
        &=\langle e^{i\mathbf{y}}, e^{i\mathbf{z}}\rangle \\
        &=\langle y, z\rangle
    \end{align*}
\end{proof}

\begin{proof}
\begin{align*}
    \langle \F, \mathbf{G}\rangle &= \int_x \int_{x'} \langle E_X(x)\otimes E_Y(f(x)), E_X(x')\otimes E_Y(g(x')) \rangle dx' dx \\
    &= \int_{|x-x'|<\epsilon} \langle E_X(x)\otimes E_Y(f(x)), E_X(x')\otimes E_Y(g(x'))\rangle dx'dx\\
    &\qquad\qquad+\int_{|x-x'|>\epsilon} \langle E_X(x)\otimes E_Y(f(x)), E_X(x')\otimes E_Y(g(x'))\rangle dx'dx\\
    &= \int_{x} \langle E_X(x)\otimes E_Y(f(x)), E_X(x)\otimes E_Y(g(x))\rangle dx + noise\\
    &\approx \int_x \langle E_Y(f(x)), E_Y(g(x))\rangle dx \qquad \text{by Lemma \ref{thm:lemma}.}\\
    &=\int_x\sum_{k=0}^\infty \frac{(-1)^k}{(2k)!!}\beta^{2k}(f(x)-g(x))^{2k}dx \qquad\text{by Theorem \ref{thm:FPE}} \\
    &\approx b-a\int_x |f(x)-g(x)|^2 dx \qquad\text{by taking the first and second order terms}
\end{align*}
The second line holds because when $d_X(x,x')$ is larger than the receptive field $\epsilon_0$, $E_X(x)$ and $E_X(x')$ will be orthogonal, so the similarity between $E_X(x)\otimes E_Y(f(y))$ and $E_X(x')\otimes E_Y(g(y))$ will be close to zero and they will be summed as noise.
\end{proof}

\section{Empirical Experiment of HDFE}
\label{sec:app_empirical}
In this section, we verify the properties claimed in Sec. \ref{sec:properties} with empirical experiments.
\subsection{Sample Invariance}
\label{sec:app_experiment_sample_invariance}
In Fig. \ref{fig:fig3}, we demonstrate that the function encoding produced by HDFE remains invariant of both the sample distribution and sample density. Specifically, we sample function values from three distinct input space distributions, namely left-skewed, right-skewed, and uniform distribution, each with sample sizes of either 5000 or 1000. We then calculate the similarity between the function vectors generated from these six sets of function samples. Before tuning the function vectors, the representation is influenced by the sample distributions (Fig. \ref{fig:fig3} Mid). However, after the tuning process, the function vector becomes immune to the sample distribution (Fig. \ref{fig:fig3} Right). 
\begin{figure}[H]
     \centering
     \begin{subfigure}[b]{0.3\textwidth}
         \centering
         \includegraphics[width=\textwidth]{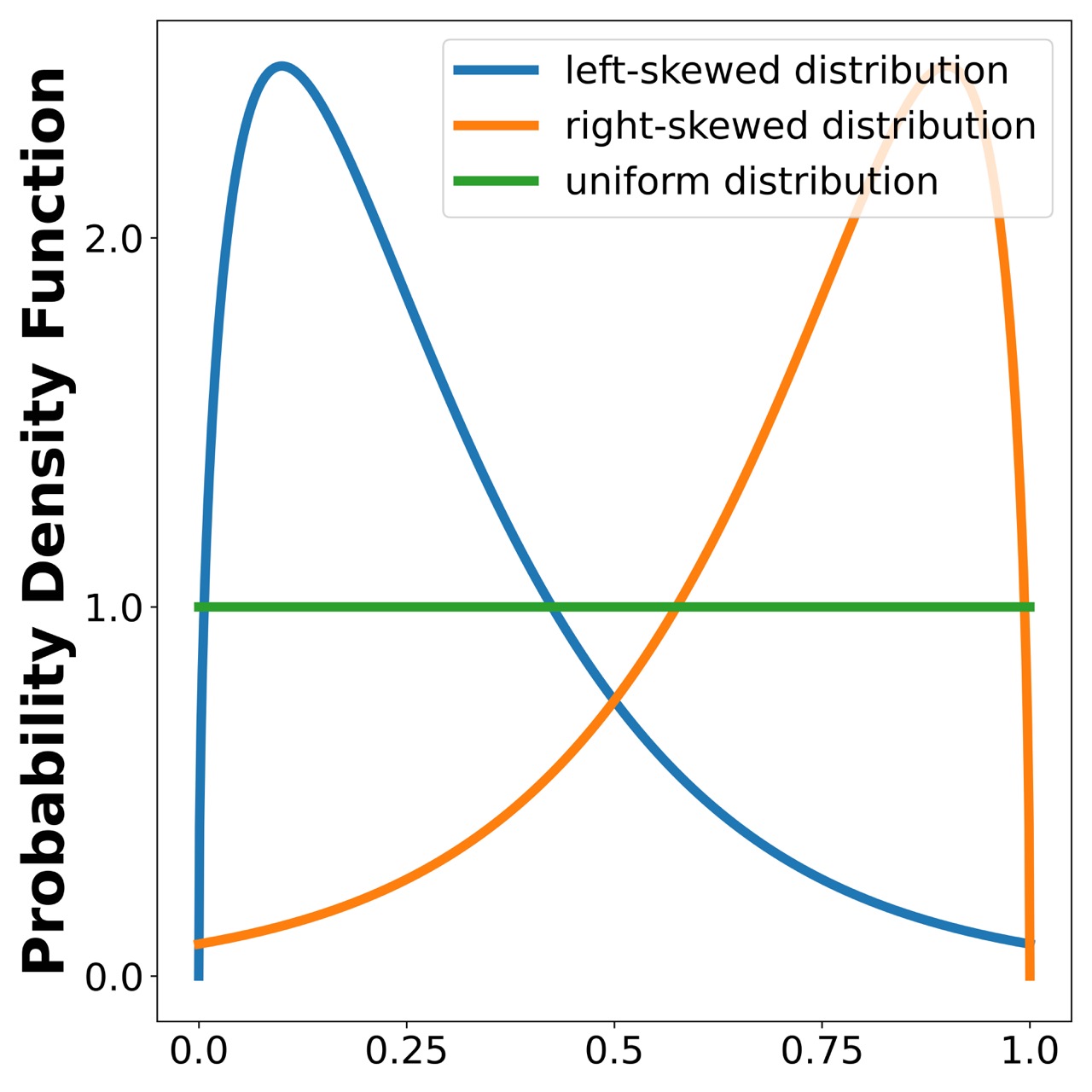}
     \end{subfigure}
     \hfill
     \begin{subfigure}[b]{0.3\textwidth}
         \centering
         \includegraphics[width=\textwidth]{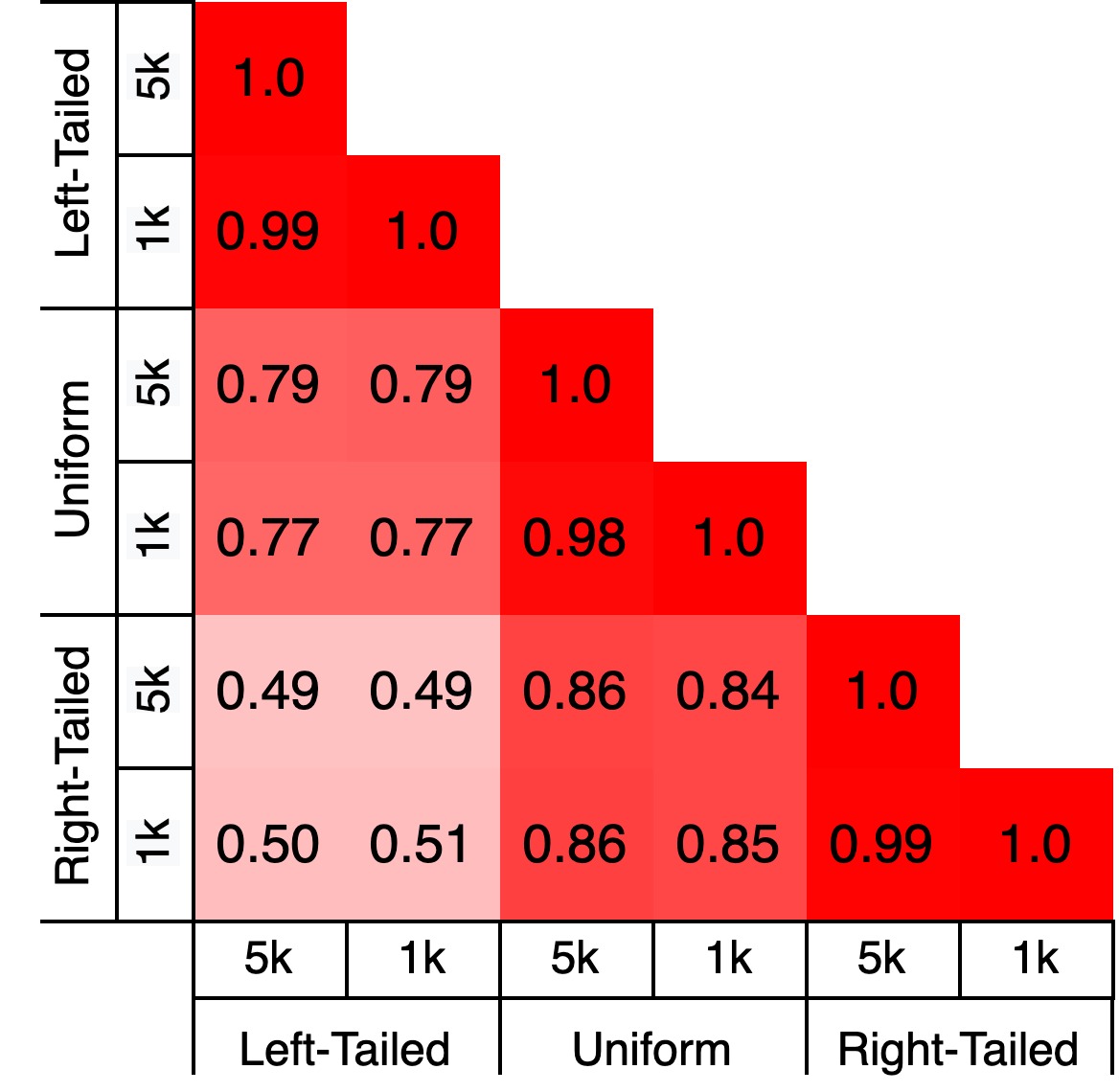}
     \end{subfigure}
     \hfill
     \begin{subfigure}[b]{0.3\textwidth}
         \centering
         \includegraphics[width=\textwidth]{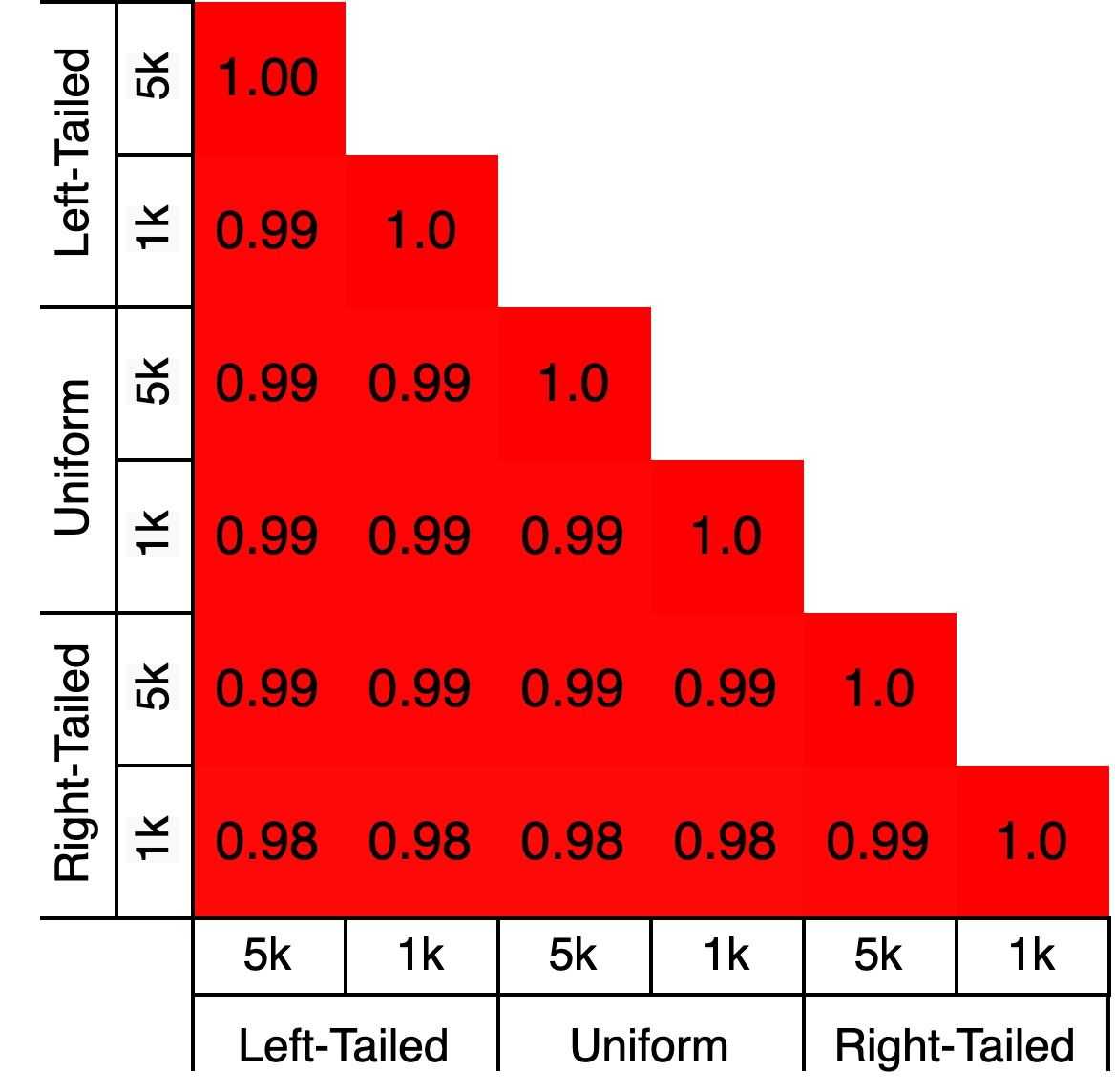}
     \end{subfigure}
    \caption{HDFE is invariant of sample distribution and sample size. \textbf{Left}: Three distributions where the function samples are drawn from. For each distribution, the sample size is either 5000 or 1000. \textbf{Mid, Right}: Similarity among the function vectors generated by the six sets of function samples, before and after the function vector tuning process, respectively.}
    \label{fig:fig3}
    \vspace{-7pt}
\end{figure}

\subsection{Isometry}
\label{sec:app_experiment_isometry}

% \begin{wrapfigure}{R}{0.4\textwidth}
% \vspace{-28pt}
% \begin{minipage}{0.5\textwidth}
% \end{minipage}
% \vspace{-7pt}
% \end{wrapfigure}
In Fig. \ref{fig:isometry}, we generate pairs of random functions and compute their function encodings through HDFE. We plot the L2-distance between the functions and the similarity between their encodings. We discover a strong correlation between them. This coincides the isometric property claimed in Theorem \ref{thm:isometry}.
\begin{figure}[H]
    \centering
    \includegraphics[width=0.4\textwidth]{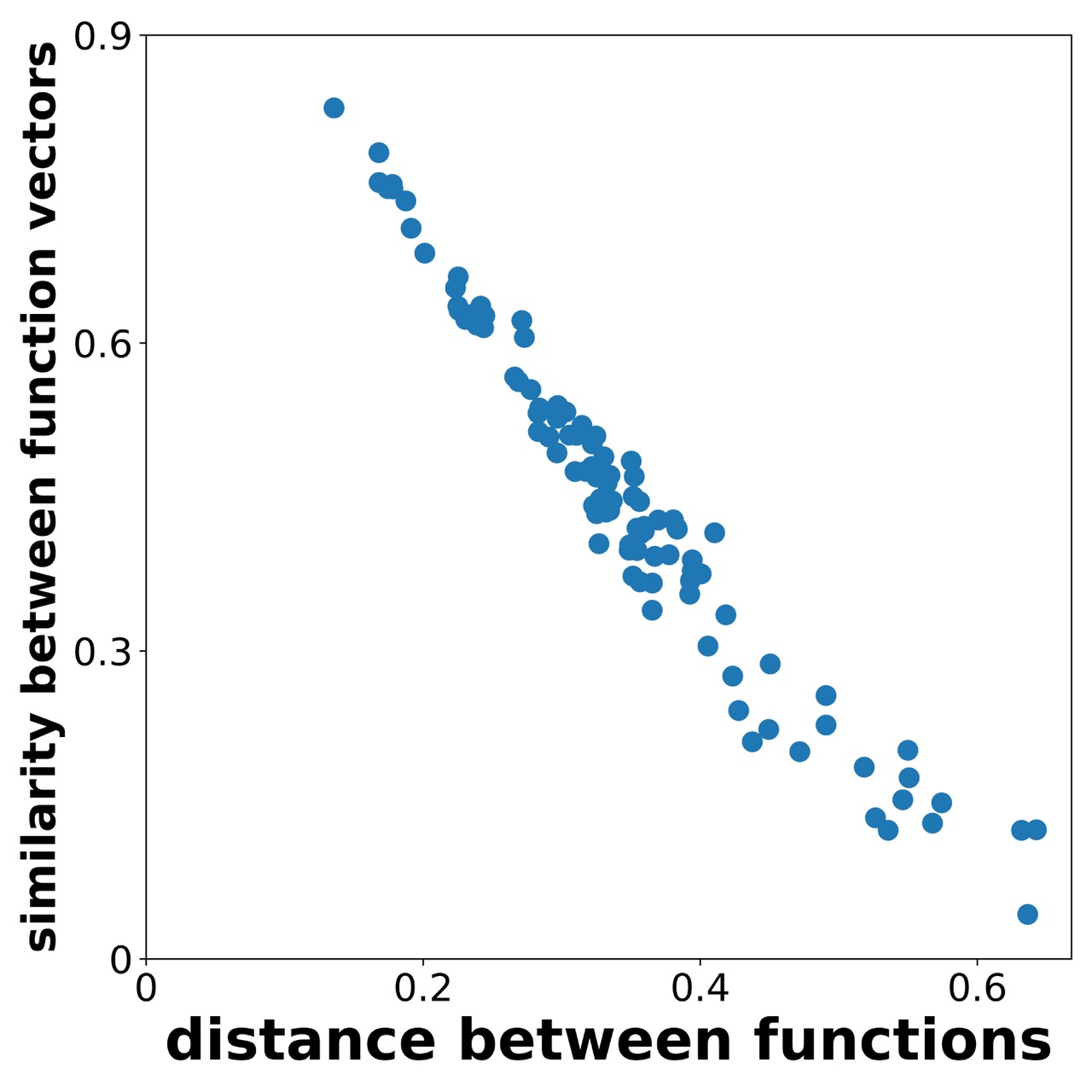}
    \caption{HDFE is a distance preserving transformation. The L2-distance between functions is proportional to the negative similarity between their encodings.}
    \label{fig:isometry}
\end{figure}

\subsection{Practical Consideration of Iterative Refinement}
\label{sec:app_experiment_cost}
In Algorithm \ref{alg:1}, we propose one implementation of iterative refinement, which iteratively adds the sample encoding that has the minimum similarity with the function encoding. This implementation is a conservative implementation that guarantees asymptotic sample invariance. However, in practical applications, strict sample invariance may not be necessary. For example, achieving a similarity threshold of 0.99 when constructing with different samples might not be required. This relaxation is viable because the downstream neural network possesses an inherent ability to handle some level of inconsistency. Therefore, we consider a practical adaptation of Algorithm \ref{alg:1} by introducing slight modifications to accommodate these real-world considerations.

\textbf{One-Shot Refinement} In Algorithm \ref{alg:1}, the motivation of the iterative refinement is to balance the weights between dense and sparse samples. By iterative refinement, we adjust the function encoding so that the sparse samples also contribute to the encoding. Such motivation can be achieved by another cheaper one-shot refinement. After obtaining the initial function encoding by averaging the sample encoding, we compute the similarity between this initial encoding and all the sample encodings. The similarity can serve as a rough estimation of the sample density at a particular point. Therefore, if we were to balance the weights between dense and sparse samples, we can simply recompute the weights by the inverse estimated density. Algorithm \ref{alg:app_oneshot_refinement} illustrates the procedure.

\begin{algorithm}[H]
\caption{One-Shot Refinement}\label{alg:app_oneshot_refinement}
\begin{algorithmic}
\State $z_i \gets E_X(x_i)\otimes E_Y(f(x_i))$ for all $i$.
\State $\F=\sum_i z_i$
\For {i}
\State $w_i = \langle \F, z_i\rangle$ \Comment{$w_i$ is an estimation of the density at $(x_i, f(x_i))$.}
\State $w_i=\max(\epsilon, w_i)$ \Comment{Ensure numerical stability.}
\State $w_i = w_i^{-1} / \sum_{j=1}^n w_j^{-1}$ \Comment{Compute inverse density.}
\EndFor
\State $\F = \sum_i w_i\cdot z_i$
\end{algorithmic}
\end{algorithm}

\textbf{Although one-shot refinement is not strictly sample distribution invariant, it is a quite good approximation of the sample invariant function encoding and is very cheap to compute.} We perform a comparison among no refinement, one-shot refinement and iterative refinement with synthetic data, where we encode the same function sampled with two different distributions and compute the similarity between the two encodings. Specifically, we generate a random function by
\begin{equation}
    f(x)=\frac{1}{2}+\frac{1}{8}\sum_{k=1}^4 a_k\sin(2\pi kx)\label{eqn:app_random_function}
\end{equation}
where $a_k\sim Uniform(0, 1)$ are the parameters controlling the generation and $f(x)\in(0,1)$. We construct the encoding of the function by samples from two different sample distributions. The first distribution is computed by $x_i\sim Uniform(0,1)$ and $x_i\leftarrow x_i^2$. The second distribution is computed by $x_i\sim Uniform(0, 1)$ and $x_i\leftarrow 1-x_i^2$. Consequently, the first distribution is left-tailed, and the second distribution is right-tailed. Then we compare the similarity of function encodings generated by no iterative refinement, one-shot refinement, and iterative refinement. Figure \ref{fig:app_refinement_comparison} shows the comparison, which demonstrates that one-shot refinement is a quite good approximation of the sample invariant function encoding (the similarity increases from $\sim$ 0.5 to 0.98 after one-shot refinement). In Appendix \ref{sec:app_efficacy_sample_invariance}, Table \ref{tab:app_sample_invariance_table}, we also compare the effectiveness of the three refinement schemes in a synthetic regression problem.

\begin{figure}[H]
    \centering
    \includegraphics[width=0.45\textwidth]{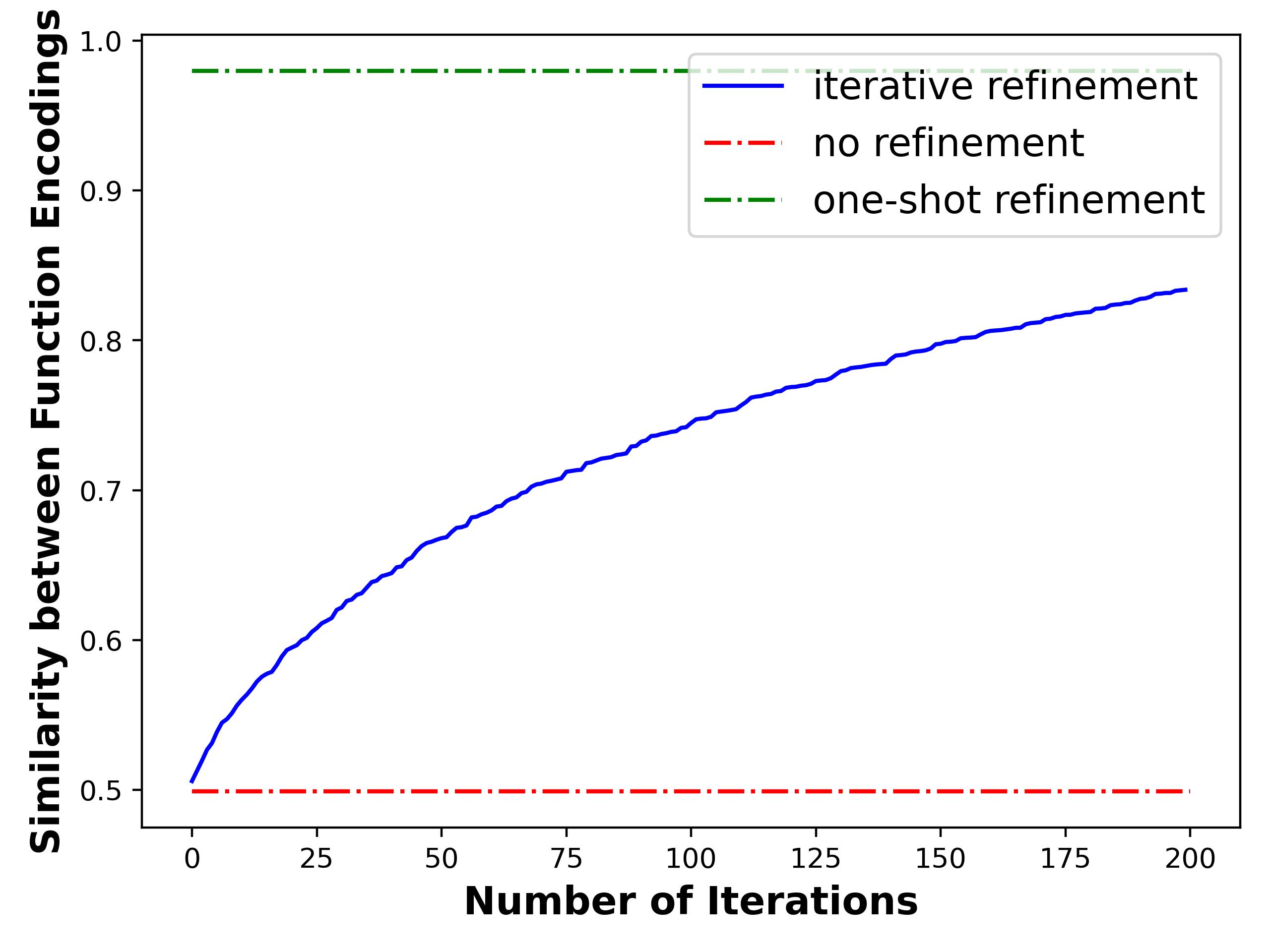}
    \caption{When encoding the same function under two different sample distributions, one-shot refinement can approximate the sample invariant function encoding well. It takes 75/90/1500 ms to encode 5000 samples on a CPU, and 7.5/8.0/250 ms on an NVIDIA Titan-X GPU when performing no refinement/one-shot refinement/200-step iterative refinement.}
    \label{fig:app_refinement_comparison}
\end{figure}

\subsection{Effectiveness of Sample Invariance}
\label{sec:app_efficacy_sample_invariance}
In this section, we examine the effectiveness of HDFE's sample invariance property through an synthetic function regression problem. We first generate random functions by \eqref{eqn:app_random_function},
where $a_k\sim Uniform(0,1)$ are the parameters controlling the generation. The task is to regress the coefficients $\left[a_1, a_2, a_3, a_4\right]$ from the function samples $\{x_i, f(x_i)\}$. Regarding the sample size, the number of function samples is 5000 in the training phase and 2500 in the testing phase. Regarding sample distribution, we consider two different settings:

\textbf{Setting 1 (No Sample Distribution Variation)}: The sample distribution is consistent between training phase and testing phase. We let $x_i\sim Uniform(0, 1)$ in both training phase and testing phase. 

\textbf{Setting 2 (Sample Distribution Variation)}: The sample distribution is different between the training phase and testing phase. Specifically, in the training phase, we let $x_i\sim Uniform(0, 1)$ and $x_i \leftarrow x_i ^2$. In the testing phase, we let $x_i\sim Uniform(0, 1)$ and $x_i\leftarrow 1-x_i^2$. Consequently, the sample distribution in the training phase is left-tailed, while in the testing phase is right-tailed.

We compare our HDFE with PointNet in terms of mean squared error (MSE) and the R-squared ($R^2$) metrics. For HDFE, we compare the performance among no refinement, one-shot refinement (introduced in Appendix \ref{sec:app_experiment_cost}), and 200-step iterative refinement. Table \ref{tab:app_sample_invariance_table} shows the comparison.

In Setting 1, when there is no distribution variation, HDFE achieves significantly lower error than PointNet. This is because HDFE is capable of capturing the entire distribution of functions, while PointNet seems to struggle on that.

In Setting 2, when there is distribution variation, PointNet fails miserably, while HDFE, even without iterative refinement, already achieves fairly good estimation, and even better than the PointNet in Setting 1. In addition, the experiment also shows that both the one-shot refinement and the iterative refinement are effective techniques to improve the robustness to distribution variation.

% Please add the following required packages to your document preamble:
% \usepackage{multirow}
% \usepackage{graphicx}
\begin{table}[t]
\centering
\caption{Performance of function parameters regression. PointNet fails when sample distribution varies between training and testing phases, while HDFE is robust to the sample distribution variation.}
\vspace{-7pt}
\label{tab:app_sample_invariance_table}
\resizebox{0.9\textwidth}{!}{%
\begin{tabular}{cl|c|ccc}
\multicolumn{1}{l}{}                              &       & \multicolumn{1}{l|}{\multirow{2}{*}{PointNet}} & \multicolumn{3}{c}{HDFE}             \\ \cline{4-6} 
\multicolumn{1}{l}{}                                 & \multicolumn{1}{c|}{} & \multicolumn{1}{l|}{} & No Refinement & One-Shot Refinement & 200-Step Iter. Ref. \\ \hline
\multicolumn{1}{c|}{\multirow{2}{*}{No Distr. Var.}} & MSE                   & 0.0037                & $<$ 0.0005    & $<$ 0.0005          & $<$ 0.0005          \\
\multicolumn{1}{c|}{}                             & $R^2$ & 0.978                                          & $>$ 0.9975 & $>$ 0.9975 & $>$ 0.9975 \\ \hline
\multicolumn{1}{c|}{\multirow{2}{*}{Distr. Var.}} & MSE   & 0.0717                                         & 0.003      & $<$ 0.0005 & 0.001      \\
\multicolumn{1}{c|}{}                             & $R^2$ & 0.513                                          & 0.982      & $>$ 0.9975 & 0.992     
\end{tabular}%
}
\vspace{-10pt}
\end{table}

\subsection{Information Loss of HDFE}
\label{sec:app_information_loss}
In this section, we analyze the information loss when encoding continuous objects with HDFE. It is intuitive that a larger encoding dimensionality induces smaller information loss, and encoding a function changing more rapidly induces larger information loss. We attempt to quantify the relation through empirical experiments. We generate random functions and we measure the ``function complexity" by the integral of the absolute gradient: $\mathrm{complexity}(f)=\int_0^1 |f'(x)|dx$. Consequently, functions changing more rapidly yield a higher $\mathrm{complexity}(f)$. 

We study the relation between the reconstruction mean absolute error (MAE) / R-squared ($R^2$) and the function complexity under different encoding dimensions. Figure \ref{fig:app_info_loss} reveals the MAE exhibits a linear relation with the input function complexity, while the R-squared seems not to be affected by the function complexity when the dimension is large enough.
\begin{figure}[H]
     \centering
     \begin{subfigure}[b]{0.45\textwidth}
         \centering
         \includegraphics[width=\textwidth]{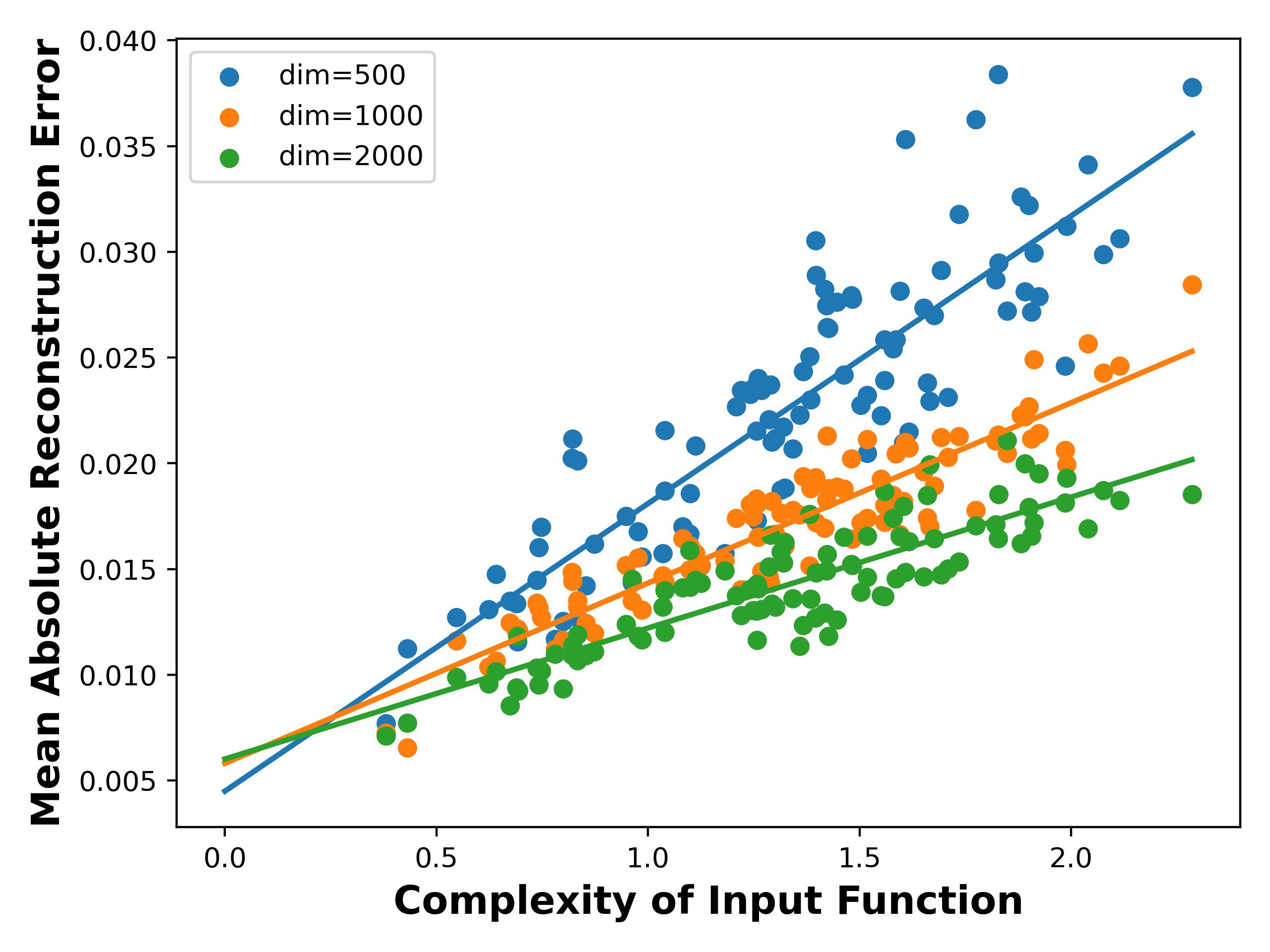}
     \end{subfigure}
     \hfill
     \begin{subfigure}[b]{0.45\textwidth}
         \centering
         \includegraphics[width=\textwidth]{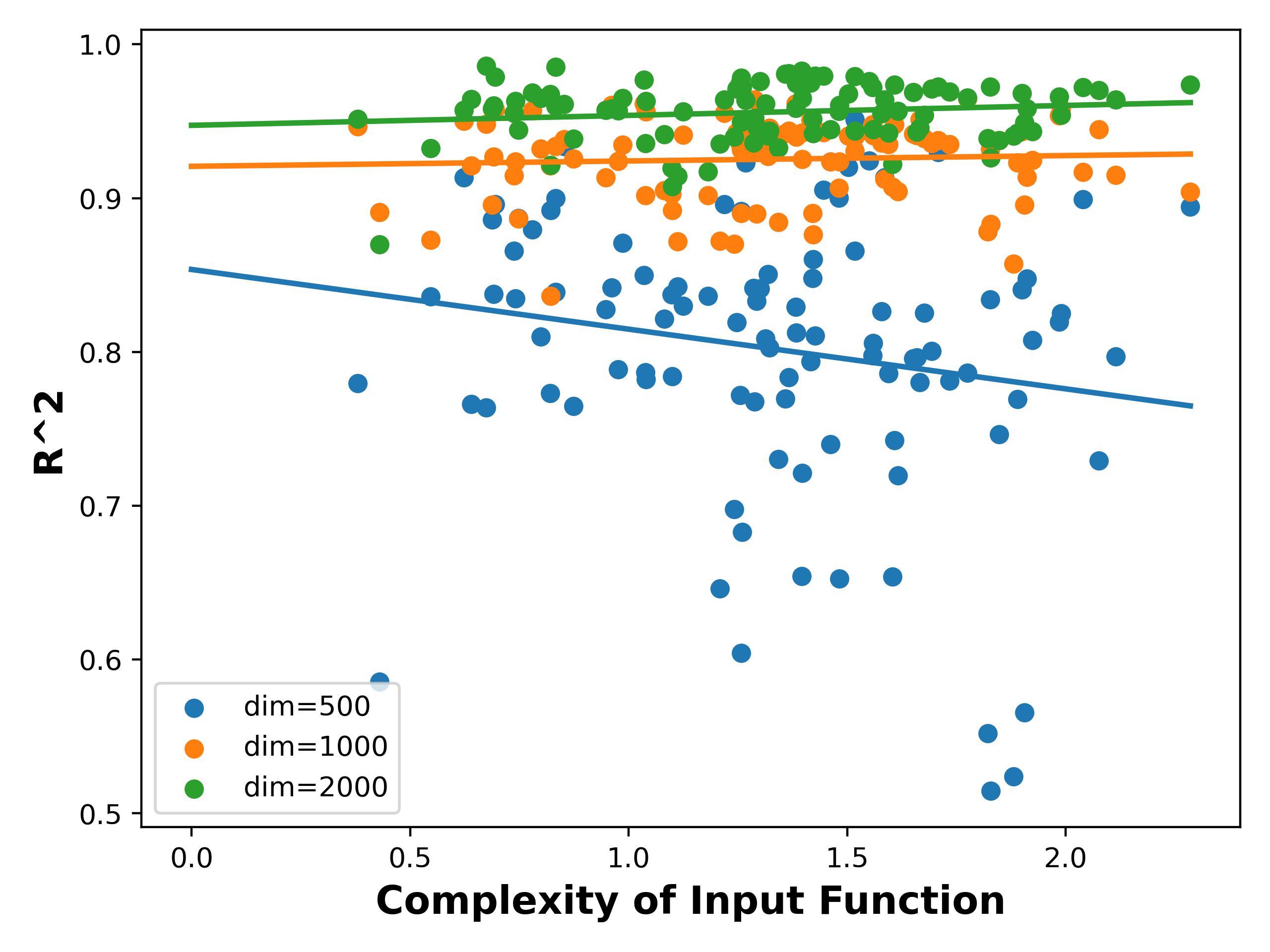}
     \end{subfigure}
    \caption{Empirical information loss when encoding functions of different complexities.}
    \label{fig:app_info_loss}
    \vspace{-7pt}
\end{figure}

\subsection{Low-Rank High-Dimensional Scenarios}
\label{sec:app_experiment_high_dimension}
We generate random functions by first randomizing $x_k\in\mathbb{R}^d$ and $\alpha_k\in\mathbb{R}$, the random function is constructed by:
$$
f(x)=\sum_{k=1}^n \alpha_k\cdot K(x,x_k)
$$
where $n$ can measure the complexity of the function, and $d$ is the dimension of the function input. The testing samples are generated by $x_k+noise$ for all $k\in[n]$.

In Fig. \ref{fig:high_dimension}, the reconstruction error increases as $n$ increases, which indicates the encoding quality is negatively correlated to the complexity of the function. However, the reconstruction error does not change as $d$ increases, which indicates that the encoding quality does not depend on the explicit dimension of the function input.

This empirical experiment shows that HDFE has the potential to operate on high-dimensional data, because the encoding quality of HDFE does not depend on the dimension of the function input, but only depends on the complexity of the function.

\begin{figure}[H]
    \centering
    \includegraphics[width=0.4\textwidth]{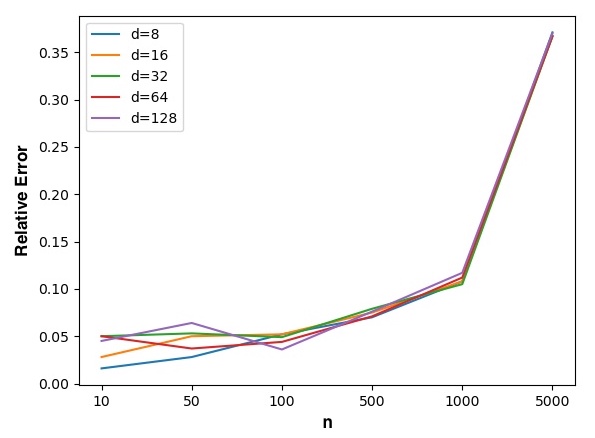}
    \caption{The reconstruction error of HDFE is negatively correlated to the complexity of the function, but does not depend on the dimension of the function input.}
    \label{fig:high_dimension}
\end{figure}

\section{Experiment Details}
\label{sec:app_experiment_details}
\subsection{PDE Solver}
\label{sec:app_PDE_solver_details}
The PDE and the solution are encoded into two embedding vectors with length $N$. A deep complex network is trained to learn the mapping between two vectors. The architecture is a sequence of layers: \texttt{[ComplexLinear(N,256), ComplexReLU(), ComplexLinear(256,256), ComplexReLU(), ComplexLinear(256,256), ComplexReLU(), ComplexLinear(256,N)]}. The network is trained with Adam optimizer with a learning rate of 0.001 for 20,000 iterations. The $\alpha$ value in \eqref{eqn:FPE} is 15, 25, 42, 45 for $N=4000,8000,16000,24000$ and the $\beta$ value is 2.5.

\subsection{Surface Normal Estimation}
\label{sec:app_normal_estimation_details}
The architecture is a sequence of layers: \texttt{[ComplexLinear(N,256), ComplexBatchNorm() ComplexReLU(), ComplexLinear(256,256), ComplexBatchNorm(), ComplexReLU(), ComplexLinear(256,128), ComplexBatchNorm(), ComplexReLU()]}. After the sequence of layers, it will produce a 128-dimensional complex vector \texttt{z}. Since we desire a 3-dimensional real vector output (normal vector in $\mathbb{R}^3$), we use two \texttt{RealLinear(128,3)} layers \texttt{L\_real} and \texttt{L\_imag}. The final output normal vector is \texttt{L\_real(z.real) + L\_imag(z.imag)}. The network is trained with Adam optimizer with learning rate 0.001 for 270 epochs. The $\alpha$ is chosen as 20 and the dimensionality is chosen as 4096.

\subsection{Adding HDFE Module to HSurf-Net}
\label{sec:app_hsurfnet_details}
Denote the input local patch as $P$ with shape $(B, N, 3)$, where $B$ is the batch size, $N$ is the number of points in a local patch. HSurf-Net uses their novel space transformation module to extract $n$ keypoints and their $K$ nearest neighbors. The resulting data is denoted as $P\_sub$ with shape $(B, n, K, 3)$. HSurf-Net uses a PointNet to process $P\_sub$, by first lifting the dimension to $(B,n,K,C)$ and then doing a maxpooling to shape the data into $(B,n,C)$. We add our HDFE module here: we use HDFE to shape the data from $(B,n,K,3)$ to $(B,n,C)$, by first lifting the dimension from $(B,n,K,3)$ to $(B,n,K,512)$ using \eqref{eqn:FPE} and then average the embedding across the neighbors to shape it into $(B, n, 512)$. Then we use a fully-connected layer to map the data into $(B, n, C)$, which becomes the HDFE feature. Then we sum the features generated by HSurf-Net and HDFE into a $(B,n,C)$ matrix and pass to the output layer as HSurf-Net does.

\section{Ablation Studies of Surface Normal Estimation}
\label{sec:app_ablation}
% Please add the following required packages to your document preamble:
% \usepackage{graphicx}
% \usepackage[table,xcdraw]{xcolor}
% If you use beamer only pass "xcolor=table" option, i.e. \documentclass[xcolor=table]{beamer}
\begin{table}[h]
\centering
\caption{Ablation Studies on the PCPNet dataset.}
\label{tab:my-table}
\resizebox{\textwidth}{!}{%
\begin{tabular}{c|ccccccc|ccccccc}
Dimension &
  \multicolumn{7}{c|}{2048} &
  \multicolumn{7}{c}{4096} \\ \hline
Noise Level &
  None &
  Low &
  Med &
  High &
  Stripe &
  Gradient &
  \cellcolor[HTML]{EFEFEF}Average &
  None &
  Low &
  Med &
  High &
  Stripe &
  Gradient &
  \cellcolor[HTML]{EFEFEF}Average \\ \hline
$\alpha=10$ &
  8.98 &
  10.80 &
  \textbf{17.40} &
  \textbf{22.18} &
  10.62 &
  9.92 &
  \cellcolor[HTML]{EFEFEF}13.32 &
  9.00 &
  \textbf{10.60} &
  \textbf{17.40} &
  \textbf{22.48} &
  10.50 &
  9.87 &
  \cellcolor[HTML]{EFEFEF}13.31 \\
$\alpha=15$ &
  8.98 &
  11.02 &
  17.73 &
  22.72 &
  10.85 &
  9.46 &
  \cellcolor[HTML]{EFEFEF}13.46 &
  8.29 &
  10.77 &
  17.46 &
  22.63 &
  10.17 &
  9.10 &
  \cellcolor[HTML]{EFEFEF}13.07 \\
$\alpha=20$ &
  \textbf{8.14} &
  \textbf{10.53} &
  17.86 &
  23.07 &
  \textbf{9.93} &
  \textbf{8.81} &
  \cellcolor[HTML]{EFEFEF}\textbf{13.06} &
  \textbf{7.97} &
  10.72 &
  17.69 &
  22.76 &
  \textbf{9.47} &
  \textbf{8.67} &
  \cellcolor[HTML]{EFEFEF}\textbf{12.88} \\
$\alpha=25$ &
  8.86 &
  11.43 &
  17.94 &
  22.85 &
  10.42 &
  9.33 &
  \cellcolor[HTML]{EFEFEF}13.47 &
  8.40 &
  11.23 &
  17.66 &
  22.74 &
  9.89 &
  8.95 &
  \cellcolor[HTML]{EFEFEF}13.15
\end{tabular}%
}
\end{table}
% Please add the following required packages to your document preamble:
% \usepackage{graphicx}
% \usepackage[table,xcdraw]{xcolor}
% If you use beamer only pass "xcolor=table" option, i.e. \documentclass[xcolor=table]{beamer}
\begin{table}[h]
\centering
\caption{Ablation Studies on the FamousShape dataset.}
\label{tab:my-table}
\resizebox{\textwidth}{!}{%
\begin{tabular}{c|ccccccc|ccccccc}
Dimension &
  \multicolumn{7}{c|}{2048} &
  \multicolumn{7}{c}{4096} \\ \hline
Noise Level &
  None &
  Low &
  Med &
  High &
  Stripe &
  Gradient &
  \cellcolor[HTML]{EFEFEF}Average &
  None &
  Low &
  Med &
  High &
  Stripe &
  Gradient &
  \cellcolor[HTML]{EFEFEF}Average \\ \hline
$\alpha=10$ &
  15.12 &
  18.43 &
  30.86 &
  \textbf{38.66} &
  15.52 &
  13.82 &
  \cellcolor[HTML]{EFEFEF}22.07 &
  15.06 &
  18.15 &
  \textbf{30.61} &
  38.50 &
  16.81 &
  13.71 &
  \cellcolor[HTML]{EFEFEF}22.14 \\
$\alpha=15$ &
  14.42 &
  \textbf{17.97} &
  \textbf{30.45} &
  38.92 &
  15.47 &
  13.81 &
  \cellcolor[HTML]{EFEFEF}21.84 &
  13.68 &
  \textbf{17.77} &
  31.17 &
  38.79 &
  14.83 &
  13.06 &
  \cellcolor[HTML]{EFEFEF}21.55 \\
$\alpha=20$ &
  \textbf{13.37} &
  18.43 &
  31.40 &
  39.03 &
  \textbf{13.92} &
  \textbf{12.54} &
  \cellcolor[HTML]{EFEFEF}\textbf{21.45} &
  \textbf{13.04} &
  17.99 &
  31.23 &
  \textbf{38.57} &
  \textbf{14.01} &
  \textbf{12.13} &
  \cellcolor[HTML]{EFEFEF}\textbf{21.16} \\
$\alpha=25$ &
  14.70 &
  19.71 &
  31.52 &
  38.95 &
  15.04 &
  13.56 &
  \cellcolor[HTML]{EFEFEF}22.25 &
  14.03 &
  19.16 &
  31.39 &
  38.65 &
  14.38 &
  13.22 &
  \cellcolor[HTML]{EFEFEF}21.81
\end{tabular}%
}
\end{table}
In general, when the dimensionality is higher, the error is lower. When the receptive field is large ($\alpha$ is small), HDFE performs better when the noise level is high. When the receptive field is small ($\alpha$ is large), HDFE performs better when the noise level is low. This coincides with the analysis in Fig. \ref{fig:fig2}. A large receptive field tends to filter out the perturbations and therefore is more robust to noise. A small receptive field can capture the high-frequency details and therefore is more accurate.

\section{Noise Robustness}
\label{sec:app_noise_robustness}
In this section, we further analyze why HDFE is robust to point perturbations. For visualization purposes, we perform the analysis on 2d data, while the analysis generalizes well to higher dimensions. We randomly sample 1000 points from the unit circle $x^2 + y^2 = 1$ and add Gaussian noises to the samples. Then we encode the points into vectors using HDFE with different receptive fields ($\alpha=10, 15, 20, 25$) and reconstruct the implicit function. The pseudo-color plot in Fig. \ref{fig:app_noise} visualizes the likelihood that a point lies on the unit circle.

When $\alpha$ is small, the visualization shows that the reconstructions under different noise levels are similar, which tells that the encodings under different noise levels are similar. So it demonstrates a robustness to point perturbations. On the other hand, when $\alpha$ is large, the reconstruction is more refined and captures the high-frequency details, but as a price, it is more sensitive to the point perturbations.
\begin{figure}[h]
    \centering
    \includegraphics[width=0.8\textwidth]{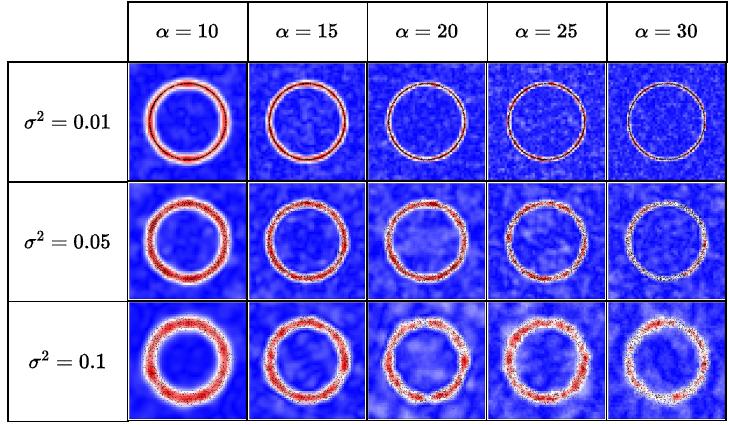}
    \caption{Reconstruction of implicit functions sampled with noisy inputs under different choices of receptive field.}
    \label{fig:app_noise}
\end{figure}

\end{document}